\newcommand{\sq}{\hbox{\rlap{$\sqcap$}$\sqcup$}}
\newcommand{\qed}{\hspace*{\fill}\sq}
\newtheorem{theorem}{Theorem}
\newtheorem{lemma}{Lemma}
\newtheorem{definition}{Definition}
\newcommand{\cR}{{\cal Q}}
\newcommand{\dist}{{\rm dist}}
\newcommand{\cov}{{\sc OnlineCPP}}
\newcommand{\cof}{{\sc OfflineCPP}}
\newcommand{\cova}{{\sc OnlineCPPAlg}}
\newcommand{\change}[1]{\textcolor{red}{#1}}
\begin{document}
%\title{\LARGE \bf %A $O(1)$-Approximation for the 
%Efficient Online Coverage Planning with Energy Constraint}
\title{\LARGE \bf A Constant-Factor Approximation Algorithm for Online \\Coverage Path Planning with Energy Constraint}

\author{Ayan Dutta, Gokarna Sharma
\thanks{A. Dutta is with the School of Computing, University of North Florida, Jacksonville, FL 32224, USA
{\tt\small a.dutta@unf.edu}} \thanks{G. Sharma is with the Department of Computer Science,
Kent State University, Kent, OH 44242, USA
{\tt\small sharma@cs.kent.edu}
}
}

\maketitle
\thispagestyle{empty}
\pagestyle{empty}

\begin{abstract}
In this paper, we study the problem of coverage planning by a mobile robot with a limited energy budget. The objective of the robot is to cover every point in the environment while minimizing the travelled path length. The environment is initially unknown to the robot. Therefore, it needs to avoid the obstacles in the environment on-the-fly during the exploration. As the robot has a specific energy budget, it might not be able to cover the complete environment in one traversal. Instead, it will need to visit a static charging station periodically in order to recharge its energy. To solve the stated problem, we propose a budgeted depth first search (DFS)-based exploration strategy that helps the robot to cover any unknown planar environment while bounding the maximum path length to a constant-factor of the shortest-possible path length. Our $O(1)$-approximation guarantee advances the state-of-the-art of log-approximation for this problem. 
Simulation results show that our proposed algorithm outperforms the current state-of-the-art algorithm both in terms of the travelled path length and run time in all the tested environments with concave and convex obstacles.
\end{abstract}

\section{Introduction}
\label{section:introduction}
Coverage planning is the task of finding a path or a set of paths to cover all the points in an environment \cite{choset2001coverage}. In robotics, this problem has many potential real-world applications including autonomous sweeping, vacuum cleaning, and lawn mowing. In an {\em online} version of the problem, the area of interest is initially unknown to the robot. Therefore it needs to discover and avoid the unknown obstacles in the environment while covering all the points in the free space by traveling as minimum distance as possible \cite{choset2001coverage}. 

Traditionally, this problem has been studied assuming that the robot has an unlimited energy budget, where given a robot, a single
path can be planned to cover the given environment. 
The  {\em offline} version of the problem where the robot(s) have a priori knowledge of the environment including obstacles has been well studied, e.g., see \cite{Galceran:2013}. Many algorithms have been proposed such as the boustrophedon decomposition
based coverage \cite{Choset:2000,Mannadiar2010}, the spiral path coverage
\cite{Gonzlez2005}, and the spanning-tree based coverage
\cite{Gabriely:2001}. 
These techniques can also be adapted to solve {\em online} coverage planning \cite{WeiI18,ChoiLBO09}. 

In practice however, the robots do not have unlimited energy available. Therefore, even covering a standard-size environment (e.g., a farm) while simultaneously using on-board sensors (e.g., camera) becomes prohibitive with a single charge. A battery-powered robot needs to return to the charging station to get recharged before the battery runs out. Due to practical relevance, in the recent years, there has been a significant volume of work on the {\em energy-constrained} coverage planning problem  \cite{Strimel2014, MishraRMA16, WeiI18, WeiICRA18, ShnapsR16,Sharma2019}. 
The offline version of the problem, denoted as {\cof}, is studied in  \cite{ShnapsR16,WeiI18, WeiICRA18} and the online version, denoted as {\cov}, is studied in \cite{Sharma2019,ShnapsR16}. 
The state-of-the-art algorithm for {\cof} is due to \cite{WeiICRA18}, which provides $O(1)$-approximation. For {\cov}, the state-of-the-art algorithm is due to \cite{Sharma2019}, which provides  $O(\log (B/L))$-approximation, where $B$ is the energy budget and $L$ is the size of the robot (assuming a $L \times L$ square robot). 
Our goal in this paper is to provide a better approximation algorithm for {\cov}, i.e., to reduce the $O(\log (B/L))$-approximation of \cite{Sharma2019} to $O(1)$. % for {\cov}. 
This will show that asymptotically there is no approximation gap between {\cov} and {\cof}. 

Our proposed algorithm covers an unknown environment through a DFS traversal approach tailored for the limited energy budget $B$. Robot $r$ performs a {\em depth first search} traversal while building a tree map of the environment on-the-fly. It returns to the charging station to get its battery fully recharged (stopping the DFS exploration) when the path length of the traversal becomes at most $B$. %(that is, $\lfloor B/L\rfloor$ cells)
%\footnote{We do not consider the charging time of the battery  since this delay does not impact the cost related to these metrics.}.  
After the battery is fully charged, $r$ then moves to the cell where it stopped the DFS process, and continues traversing $P$.  
Simulation results show that our proposed algorithm is up to $5.80$ times faster and up to $2.53$ times less costlier (in terms of traversed path length) than the state-of-the-art algorithm \cite{Sharma2019}.

\noindent{\bf Contributions.}
%\subsection{OBJECTIVES AND CONTRIBUTIONS}
%\change{this paragraph is more like a list of assumptions/problem model components rather than novel contributions. Suggest to remove it completely and talk about all of this in the Problem Model Section.}
%We consider the online coverage planning problem {\cov} with a single robot that has limited energy budget $B$. %; that is, after a full recharge the robot can travel distance $D$ using a rectilinear motion as in Wei and Isler. 
%The planar polygonal environment $P$ to cover is not known to the robot a priori. %; the number of obstacles including their shapes and locations are unknown. %The environment $P$ is a polygonal area possibly containing obstacles within it; nothing is known about obstacles including the numbers of them.
Initially, the robot is at the charging station $S$ that is inside $P$.
\begin{comment}
The robot is equipped with an obstacle detection sensor (e.g., laser rangefinder) as well as position sensor (e.g., GPS). 
The robot is represented as a square  $L \times L$ that moves rectilinearly in $P$ (the environment is discretized into cells forming a 4-connected grid). Initially, the robot is at the charging station $S$ that is inside $P$. %The environment $P$ is also represented as a grid ($L \times L$ size cells) laid out on the polygonal area. 
The robot has sufficient on-board memory to store information necessary to facilitate the coverage process.  

The energy consumption of the robot is assumed to be proportional to the distance travelled, i.e., the energy budget of $B$ allows the robot to move $B$ units distance%(that is, $\lfloor B/L\rfloor$ cells)
\footnote{We do not consider the charging time of the battery  since this delay does not impact the cost related to these metrics.}.  
The robot has to go back to the charging
station $S$ to get its battery recharged before it runs out of it. 
It is assumed that the size of $P$ is such that the robot cannot fully cover $P$ in a single path with budget $B$. %; i.e., the number of cells in $P$ is at least $\lfloor B/L\rfloor +1$.
%
\end{comment}
The goal of {\cov} is to find a set of paths $\cR=\{\cR_1,\ldots,\cR_k\}$ for the robot such that 
\begin{itemize}
\item {\bf Condition (a):} Each path $\cR_i$ starts and ends at $S$ 
\item {\bf Condition (b):} Each path $\cR_i$ has length $l(\cR_i)\leq B$ 
\item {\bf Condition (c):} %, where $|\pi_i|$ is the length of the path $\pi_i$.
The paths in $\cR$ collectively cover the environment $P$, i.e., $\cup_{i=1}^{n}\cR_i=P$, 
\end{itemize}

%We will show that any algorithm satisfying simultaneously conditions (a)--(c) correctly solves {\cov}. 
%However, we are interested in finding a set of paths $\cR$ by an algorithm such that it optimizes the 
and the following two performance metrics are optimized:  
\begin{itemize}
\item {\bf Performance metric 1:} The {\em number of paths} %(or number of visits to $S$) 
in $\cR$, denoted as $|\cR|$, is minimized, and
\item {\bf Performance metric 2:} The {\em total lengths of the paths} in $\cR$, denoted as $l(\cR)=\sum_{i=1}^{n} l(\cR_i)$, is minimized. 
\end{itemize}

%In the following, we denote the number of paths by $N$ and the total length of the paths by $L$.  We have that $N=|\Pi|$ and $L= \sum_{i=1}^{n} |\pi_i|$. 
%In this paper, 
We establish the following main theorem for {\cov}. 
%for correctness and for both the performance metrics. %of an unknown polygonal area $P$ possibly containing obstacles.

\begin{theorem}[{\bf Main Result}]
\label{theorem:main}
Given an unknown planar polygonal environment $P$ possibly containing obstacles and a robot $r$ of size $L\times L$ consisting of position and obstacle detection sensors initially situated at a charging station $S$ inside $P$ with energy budget $B$, there is an algorithm that correctly solves {\cov} and guarantees 
$10$-approximation to both performance metrics compared to the optimal algorithm that has complete knowledge about $P$.  
\end{theorem}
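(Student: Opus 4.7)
The plan is to compare the algorithm's output $\cR$ against the optimal $\cR^*$ through two complementary lower bounds on $\cR^*$ and two matching upper bounds on $\cR$. Let $N$ be the number of $L\times L$ free cells that must be covered, let $T$ be the spanning tree of free cells that the online DFS implicitly builds, and let $D$ be the maximum distance from $S$ to any reachable cell. The two key lower bounds on $\cR^*$ that will serve as the denominators of the approximation ratios are: $l(\cR^*)\geq NL$, since every cell must be entered at cost at least $L$, and $|\cR^*|\geq NL/B$, since each path of length at most $B$ can enter at most $B/L$ distinct cells. A supporting inequality $|\cR^*|\geq 2D/B$ will be used for the deep-cell case.

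For the upper bound on $l(\cR)$, I would decompose ALG's total travel into the pure DFS exploration of $T$ and the extra travel required every time the robot returns to $S$ and resumes exploration on the next path. Since a complete DFS tour traverses each edge of $T$ exactly twice, the exploration portion sums to $2(N-1)L\leq 2NL$ across all paths. Because every pause point lies within distance $B/2$ of $S$ (otherwise the next path could not reach it within budget $B$), the extra travel per interruption is at most $B$, which gives $l(\cR)\leq 2NL+(|\cR|-1)B$.

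The more delicate step is bounding $|\cR|$. The key observation is that on every non-final path the algorithm continues DFS until the next step would overshoot the budget, so $d_{i-1}+\ell_i+d_i\geq B-O(L)$, where $\ell_i$ is the DFS segment length in path $i$ and $d_{i-1},d_i$ are the depths of its two endpoints. Summing this over all non-final paths and using $\sum_i\ell_i=2(N-1)L\leq 2NL$ together with a charging argument that pays for each endpoint depth $d_i$ using the new cells of depth $\Theta(d_i)$ discovered by the DFS, I expect to obtain $|\cR|\leq O(NL/B)=O(|\cR^*|)$. Substituting this back into the upper bound on $l(\cR)$ then yields $l(\cR)=O(NL)=O(l(\cR^*))$, giving constant-factor guarantees for both metrics.

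The main obstacle is the amortization in the bound on $|\cR|$: paths whose endpoints both lie at large depth may perform very little new exploration while still consuming nearly the entire budget on the detour to and from the deep region. Handling these requires charging such paths against the intrinsic cost of reaching deep cells, which OPT itself cannot avoid, since any optimal path covering a cell of depth near $B/2$ must spend nearly all of $B$ on the round trip and thus touches only $O(1)$ such cells. Balancing the constants from this deep-cell case against the global bookkeeping from the DFS tour length is what I expect will push the final approximation factor to the stated value of $10$ for both performance metrics.
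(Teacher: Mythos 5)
Your overall skeleton (lower-bounding the optimum by $NL$ and $NL/B$, upper-bounding the algorithm by the doubled tree weight plus a per-interruption detour of at most $B$) is a reasonable starting point, but it is not the paper's route, and it has a genuine gap exactly where you flag one. The paper does not carry out any charging argument at all: it proves a lemma that the budgeted traversal visits the not-yet-covered cells in the same order as the unconstrained DFS tour, observes that the on-line tree map $T_P$ has depth at most $\lfloor B/2\rfloor$, and then invokes as a black box a cited result (Das et al.) stating that for any tree of depth at most $\lfloor B/2\rfloor$ the ``interrupt-DFS-and-resume'' strategy uses at most $10$ times the optimal number of routes and $10$ times the optimal total length. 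All of the combinatorial difficulty you are wrestling with lives inside that cited theorem, so you should not expect to reconstruct the constant $10$ from the bookkeeping you describe.

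The concrete gap in your version is that the intermediate claim $|\cR|\leq O(NL/B)$ is false in general, and so is the implicit assumption that $NL/B$ is the binding lower bound on $|\cR^*|$. If the $N$ cells form a subtree hanging at depth $B/2-c$ for a small constant $c$, every route (of the algorithm and of the optimum) spends $B-2c$ on the round trip and covers at most about $2c$ new cells, so both $|\cR|$ and $|\cR^*|$ are $\Theta(N/c)\gg NL/B$; the comparison must therefore be made against a depth-stratified lower bound of the form $|\cR^*|\geq N_{\geq d}/(B-2d)$ for every depth $d$, not against $NL/B$ alone. Your proposed fix --- charging the endpoint depths $d_i$ against newly discovered cells at depth $\Theta(d_i)$ --- is precisely the step that, carried out layer by layer over the $\Theta(\log(B/L))$ geometric depth scales, yields only the $O(\log(B/L))$ bound of the prior work this paper improves upon; collapsing it to a constant, let alone to the specific constant $10$ for both metrics, requires the more delicate amortization of the cited tree-exploration result, which your sketch does not supply. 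As written, the proposal establishes correctness and the easy relation $l(\cR)\leq 2NL+(|\cR|-1)B$, but not the constant-factor bound on $|\cR|$ on which everything else hinges.
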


%Theorem \ref{theorem:main} is notable %since the best previously known algorithm \cite{Sharma2019}
%has $O(\log (B/L))$-approximation for both the metrics. Moreover, Theorem \ref{theorem:main} provides the first constant-factor approximation for {\cov}; Previously, constant-factor approximation was known only for {\cof} \cite{WeiICRA18}. % in contour-connected environments. 
%Our approximations are for any general environment not necessarily contour-connected. Wei and Isler \cite{WeiICRA18} has $4(2 \cdot Re+4)$ and $8(2 \cdot Re+4)$ approximations for metric 1 and metric 2, respectively, even for {\cof} in general environments, where $Re$ is the number of reflex vertices in the environment. 
%Furthermore, 
%the algorithm to achieve Theorem \ref{theorem:main} is simple and easy to implement compared to  % state-of-the-art $O(\log (B/L))$-approximation algorithm 
%\cite{Sharma2019}.

%\change{this paragraph gives an overview of the algorithm -- for which we already dedicated one subsection. Suggest to remove this or cut this down to 1-2 lines max.} 
This result clearly advances the current state-of-the-art as it improves upon the log-approximation provided in \cite{Sharma2019} and provides a constant-factor approximation. 
Furthermore, the proposed algorithm is easier to implement than \cite{Sharma2019}. % and easier to implement in 
%The extensive simulation results validate the achieved approximations for our algorithm.    
%\change{Abstract + Into + Related work should finish at the middle portion of this column as this is only a 6-page paper.}

%\section{Related Work} 
%\label{section:related}
%\vspace{1mm}
\noindent{\bf Related Work.}
The most closely related works to ours are %due to Sharma {\it et al.} 
\cite{Sharma2019,WeiI18,WeiICRA18,ShnapsR16}.  
%The most closely related work to ours is due to 
%Wei and Isler \cite{WeiI18,WeiICRA18}, and Shnaps and Rimon \cite{ShnapsR16} 
%(discussed above). 
Shnaps and Rimon \cite{ShnapsR16}  proposed an $1/(1-\rho)$-approximation algorithm for {\cof}, where $\rho$ is the ratio between the furthest distance between any two cells in the environment
and half of the energy budget \cite{ShnapsR16}. %The approximation factor $1/(1-\rho)$ can be arbitrarily large when $\rho$ approaches $1$. 
For {\cov}, they proposed an
$O(B/L)$-approximation algorithm. %For a unit square size robot (i.e., $L=1$), the approximation factor becomes $O(B)$. 
%They also established a lower bound of $\Omega(\log (B/4L))$-approximation for {\cov}. % (for $L=1$, this lower bound becomes $\Omega(\log B)$).
Wei and Isler \cite{WeiI18} presented an
$O(\log(B/L))$-approximation algorithm for {\cof}, which has been improved to a constant-factor approximation by them in \cite{WeiICRA18}. % for contour-connected environments and to a $4(2\cdot Re+4)$-approximation for general environments (which may not be contour-connected), where $Re$ is the number of reflex vertices of the environment. % (details in Section \ref{section:model}). 
Recently, Sharma {\it et al.} \cite{Sharma2019} provided an $O(\log (B/L))$-approximation algorithm for {\cov}. %, improving exponentially on the $O(\log(B/L))$-approximation \cite{ShnapsR16}. %, matching asymptotically the lower bound of $\Omega(\log (B/4L))$ \cite{ShnapsR16}. 
In this paper, we improve upon the log-approximation bound and provide the $O(1)$-approximation to {\cov}. %,
%Our goal in this paper is to provide a better approximation algorithm for {\cov}, that is, to reduce the $O(\log (B/L))$ approximation factor of \cite{Sharma2019} to $O(1)$. % for {\cov}. 
%showing that asymptotically there is no approximation gap between {\cov} and {\cof}. 

The other related work is the coverage of a graph. The goal is to design paths to visit every vertex of the given graph. Without energy constraints, it becomes the well-known {\em Traveling Salesperson Problem} (TSP) \cite{Applegate:2007} and a DFS traversal provides a constant-approximation of the TSP. With energy constraints, this coverage problem becomes the {\em Vehicle Routing
Problem} (VRP) \cite{Laporte92}. One version of VRP is the {\em Distance} Vehicle Routing Problem (DVRP), which models the energy consumption proportional to the distance travelled.
For DVPR on tree metrics, Nagarajan and Ravi \cite{Nagarajan:2012}
proposed a 2-approximation algorithm. Li {\it et al.} \cite{Li:1992} used a TSP-partition method and their algorithm has a similar approximation to the work in \cite{ShnapsR16}. Most of these work studied the offline version %where the environment $P$ (including obstacles) is known to the robot a priori 
so that pre-processing on the environment can be done prior to exploration obtaining better approximation. This is also the case in the algorithm of \cite{WeiI18,WeiICRA18} for {\cof}. 
Coverage with multiple robots has also received a lot of attention (e.g., see \cite{Brass:2009,Fraigniaud:2006}). In some cases, the paths planned for a single robot
under energy constraints can also be executed by multiple robots by assigning the planned paths to the robots, without affecting the total cost. In this paper, we consider coverage planning with a single robot.

\section{Problem setup}
\label{section:model}
In this paper, we use the same model as in \cite{Sharma2019,WeiI18,ShnapsR16}.

\vspace{1mm}
\noindent{\bf Environment.} The environment $P$ is a planar polygon containing a single charging station $S$ inside it. 
$P$ may possibly contain polygonal, static obstacles. 
See left of Fig.~\ref{fig:tree} for an illustration of $P$ with an obstacle $O_1$. %$P$ is discretized into square grids of size $L\times L$.%, structuring it into meaning that the discritization structures the environment into a discrete 4-connected grid.
The environment $P$ is discretized into cells forming a 4-connected grid. %Initially, the robot is at the charging station $S$ that is inside $P$.

\vspace{1mm}
\noindent{\bf Robot.} We consider the robot $r$ to be initially positioned at the charging station $S$. $r$ has size $L\times L$ that it fits within a grid-cell in $P$. 
The robot $r$ moves rectilinearly in $P$, i.e., it may move to any of the four neighbor cells (if the cell is not occupied by an obstacle) from its current cell.  %The environment $P$ is also represented as a grid ($L \times L$ size cells) laid out on the polygonal area. 
We also assume that $r$ has the knowledge of the global coordinate system through a compass on-board, that means it knows left (West), right (East), up (North), and down (South) cells consistently from its current cell. 
Robot $r$ is equipped with a position sensor (e.g., GPS) and an obstacle-detection sensor (e.g., laser rangefinder). We assume that with the laser rangefinder, the robot can detect obstacles in any of its neighbor cells. % that enable it to get to know its current position (i.e., the current cell) and also the obstacles. %We assume that $r$ does not know $P$ (including obstacles) a priori, i.e., $P$ is an unknown environment for $r$. This is challenging since $r$ can not plan beforehand how to visit $P$, optimizing the performance metrics. We assume that $r$ has enough memory to store the map of the environment $P$ while exploring online (the map will be stored as a tree graph structure). We also assume that $r$ has the knowledge of the global coordinate system, that means it knows left (West) and right (East) and up (North) and down (South) cells consistently from its current position cell. 
The robot has sufficient on-board memory to store information necessary to facilitate the coverage process. 
Moreover, we assume that initially $r$ does not have any knowledge about $P$, i.e., $P$ is an unknown environment. For the feasibility of covering all cells of $P$, we assume that $P$ is as big as a circle of radius $\lfloor B/2\rfloor$ with center at $S$. It is assumed that the energy consumption of the robot is proportional to the distance travelled, i.e., the energy budget of $B$ allows the robot to move $B$ units distance.

A path (route) $\cR_i$ is a list of cells that $r$ visits starting and ending with $S$. 
Notice that if there are some obstacles within $P$ located in such a way that they divide $P$ into two sub-polygons $P_1$ and $P_2$ with $P_1$ and $P_2$ sharing no common boundary, then $r$ cannot fully cover $P$. Therefore, we assume that there is no such cell $c$ in $P$. That means, there is (at least) a route from $S$ to any obstacle-free cell of $P$. %We have the following observation.

We call a cell {\em free} if it is not occupied by an obstacle. We call a cell {\em reachable} if it satisfies the definition below. %if it is not occupied by an obstacle. 
\begin{comment}
We now prove the correctness lemma below that all free cells of $P$ that are {\em reachable} from the base point $S$ by a cable of length $L$  are covered by $r$. 
%We need a notion of a reachable cell of $P$ for the robot $r$. 
The reachability definition for a cell in $P$ comes from Observations \ref{obs1} and \ref{obs2} as follows.
\end{comment}

\begin{definition}[Reachable Cell]
\label{definition:reachable}
Any cell $c$ in $P$ is called {\em  reachable} by the robot $r$, if and only if 
(a) it is a free cell,
(b) it is within distance $\lfloor B/2\rfloor$ from $S$, and 
(c) there must be at least a route of consecutive free cells from $S$ to $c$.
\end{definition}

\vspace{1mm}
\noindent{\bf %The Online Coverage Path Planning Problem 
{\cov}.} 
The problem is formally defined as follows.
\begin{definition}
Given an unknown planar polygonal environment $P$ possibly containing obstacles with a robot $r$ having battery budget of $B$ initially positioned at a charging station $S$ inside $P$, {\cov} is for $r$ to visit all the reachable cells of $P$  through a set of paths  so that 
\begin{itemize}
\item Conditions (a)--(c) are satisfied, and 
\item Performance metrics (1) and (2) are minimized.
\end{itemize}
\end{definition}

%\vspace{1mm}
%\noindent{\bf Approximation.} %{\cov} is a NP-hard problem \cite{ShnapsR16,ShnapsR14,WeiI18,WeiICRA18}. %\change{I would recommend not to use the word `easy'}. 
%Therefore, our goal is to obtain a polynomial time algorithm solving {\cov} optimizing the performance metrics. 
Following \cite{Sharma2019,WeiICRA18}, we measure the efficiency of any algorithm for {\cov} in terms of {\em approximation ratio} which is the worst-case ratio of the cost of the online algorithm for some environment $P$ over the cost of the optimal, offline algorithm for the same environment. % for each performance metrics. 
%The optimal algorithm is assumed to have complete knowledge of $P$ including obstacles a priori. % and it has unlimited computational power.% to plan the best paths. % and also the computation power 
%so that it can plan the best paths. 
%Denote the cost of the optimal offline algorithm by $OPT$. %for   
%We denote the cost of the optimal offline algorithm that has complete knowledge of $P$ including obstacles a priori 
%each performance metric by $OPT$. 
\begin{comment}
\begin{definition}[{\bf Approximation}]
\label{definition:competitive}
{\em An algorithm solving {\cov} is $k$-competitive if the cost of solving any instance $p$ of the problem does not exceed $k$ times the cost of solving $p$ using an optimal algorithm.}% having full information.}
\end{definition}
\end{comment}

\section{Processing Unknown Environment}
\label{section:basic}

%We discuss two basic results below that lead to our proposed algorithm for {\cov}. %We then discuss an approximation lower bound. %in the next section. 
%We discuss two basic results below which will be useful in the algorithm for {\cov} we present in the next section. 
%The first basic result is on how to decompose the environment $P$ into square cells. The second basic result deals with the construction of a tree map of the environment $P$. The robot performs both decomposition and tree map construction on-the-fly.
In this section, we discuss how the robot decomposes the environment into square grid cells and then construct a tree map of the environment on-the-fly.

\vspace{1mm}
\noindent{\bf Decomposition of the Environment.} 
\label{subsection:decompose}
%\change{suggest to change the title of the section from Basic Results to something along the line of: Handling Unknown Environment}\\
Following \cite{Sharma2019,WeiICRA18}, we decompose the environment $P$ into square cells of size $L \times L$, which is the size of the robot itself. %, using the {\em approximate cellular decomposition} method of Choset \cite{Choset:2000}. %We call these unit-grids {\em cells}. 
%Similar to \cite{ShnapsR16,WeiI18,WeiICRA18}, we assume that 
%The obstacle sizes are such that they do not partially occupy any cell in $P$, i.e., for a cell, an obstacle either occupies it completely or does not occupy it at all. %Robot $r$ computes this decomposition on-the-fly while exploring the environment. %; we describe it here with the purpose of better understanding and clarity.
\begin{figure}%{r}{2.4in}
%\vspace{-0mm}
\centering
\includegraphics[width=2.40in]{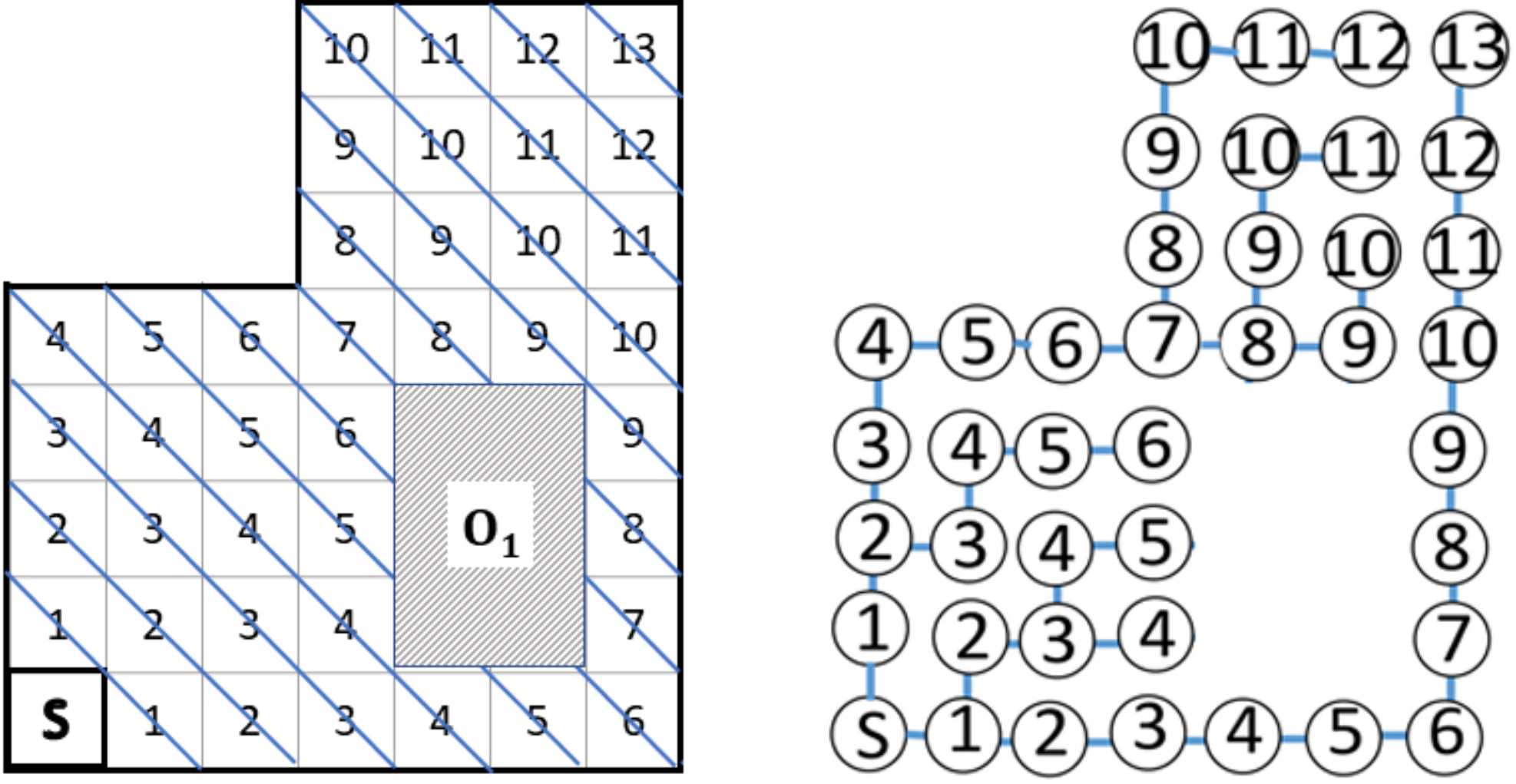}
%\vspace{-3mm}
\caption{({\bf left}) An environment $P$ with an obstacle $O_1$  and a charging station $S$. %The perimeter of $P$ is considered as a boundary of $P$. 
$P$ is shown decomposed as cells of size $L\times L$ same as the robot; ({\bf right}) An example tree map $T_P$ (right) constructed for the environment $P$ on the left. The cells at any contour $C_d$ are at depth $d$ in $T_P$. %\change{Gokarna, vspaces on top of the figures might create some PDF compatibility issue on the IEEE-RAS website as it did during the RSS submission. can we not use it in this paper?}\change{OK}
}
\label{fig:tree}
%\vspace{-3mm}
\end{figure}
An {\em equi-distance contour} is a poly-line where the cells on it has the same distance to/from the base point $S$ (the left of Fig.~\ref{fig:tree}). The cells on a contour can be ordered from one side to the other.% ($r$ has knowledge of the global coordinate system using the on-board compass). We use the left to right order. We can also order the contours based on their distance to $S$ in a strictly-increasing fashion. 

Let $c$ be a cell and $C$ be a contour. Let $d(c)$ denote the distance to $S$ from $c$ and let $d(C)$ denote the distance to $S$ from $C$. If $d(C_j)=d(C_i)+1$, we say that contour $C_j$ is contour $C_i$'s next contour. The contour $C$ with $d(C)=1$ is called the first contour. Each cell in a contour has at most $4$ reachable cells from $S$ that are its neighbors.

\vspace{1mm}
\noindent{\bf Constructing a Tree Map.}
%\subsection{Tree Map of the Environment} 
\label{section:tree}
%The robot constructs a tree map of the environment $P$ on-the-fly while covering it, which we describe here.
Initially, the robot $r$ is placed at the fixed charging station $S$. In this case, the tree, denoted by $T_{P}$, has only one node $S$, which we call the {\em root} of $T_{P}$. If there is no obstacle in $P$, each cell except the boundary cells of $P$ will have exactly four neighbor cells. %Since $r$ has the global coordinate system, each of the (at most) four neighboring cells can be consistently labeled West, North, East, and South in the clockwise order starting from cell in the West of $r$. 
%Note that this ordering does not affect the algorithm as long as the ordering used in each cell is consistent. That is, West in each cell should point the same direction throughout the environment. 
%This ordering helps in the DFS traversal. % process in the algorithm. %; otherwise we cannot provide guarantee on the algorithm that all reachable cells in the environment are covered by the algorithm.    
%\change{ Tala: Does the ordering (W,N,E,S) affect the algorithm , what is the motivation behind choosing this ordering? if we choose another ordering, how does it affect? so,do we need to mention any motivations ?}\change{Gokarna:The ordering has no effect on the algorithm: I will answer it}

Robot $r$ picks the first free cell $c_1$ according to a clockwise ordering of its neighbors starting from the west and ending in the south neighbor cell. $r$ then inserts it into $T_P$ as a {\em child} of $S$. If the cell labeled West is a reachable cell, then $r$ picks that cell. Otherwise, it goes in order of North, East, and South until it finds the first cell that is reachable. %Recall that there is at least one neighbor cell from any cell that is not occupied by any obstacle. % (Observation \ref{obs:not-occupied}). 
We now have two nodes in $T_P=\{S,c_1\}$, with $c_1$ as a child node of $S$.  Furthermore, $c_1$ is a cell in the first contour $C_1$. % (i.e., $d(C_1)=1$). 
%When there are two or more neighbor cells of the cell $r$ is currently positioned, $r$ choose the first cell in the clockwise order starting from the left cell (that is not already included in the tree $T$).

Since $r$ is building $T_P$ while exploring $P$, it will move to $c_1$ after it is included as a child in $T_P$. The robot $r$ then again repeats the process of building $T_P$ from its current cell $c_1$. While at $c_1$, $r$ is only allowed to add one of the neighboring cells of $c_1$ that are in the second contour $C_2$ (i.e., $d(C_2)=2$) as a child of $c_1$. For this, $r$ will include a neighboring cell $c_{2}$ of $c_1$ in $T_P$ only if $c_{2}$ is a cell in contour $C_2$. 
Furthermore, if some cell is already a part of $T_P$, then this cell will not be included in $T_P$ again. %Robot $r$ can easily decide which cell is in $T_P$ and which cell is not since it has sufficient memory and the knowledge of the global coordinate system.
%This process than continues.  
This process will then continue. The right of Fig.~\ref{fig:tree} provides an illustration of the tree map $T_P$ developed for the environment $P$ shown on the left.  

Essentially, any edge of $T_P$ connects two cells $c_i,c_{i+1}$ of $P$ such that $c_i\in C_i$ and $c_{i+1}\in C_{i+1}$, for $0\leq i\leq \lfloor B/2L\rfloor-1$; in Fig.~\ref{fig:tree}, each cell of contour $C_i$ on the environment $P$ on the left are at depth $i$ in $T_P$ shown on the right.  
Therefore, using this approach, all the cells in the first contour $C_1$  will be children of $S$ (the root of $T_P$), all the cells in the second contour $C_2$ will be children of the nodes of $T_P$ that are cells in the first contour $C_1$, and so on. 

There is one potential problem in certain situations. Consider the environment shown in  Fig.~\ref{fig:changecontour} where the horizontal line passing through $S$ is crossing obstacle $O_1$. The contour numbering and constructing $T_P$ based on contour numbers do not work as the cells in the right of $O_1$ may not be visited by $r$ following Algorithm \ref{algorithm:onlinecppalg} as it requires the robot to visit the cells in an increasing order of contour numbers.  
This is because the contour numbers for those cells are smaller than the contour number of the cells on North, South, and West of the obstacle. For example, see the left of Fig. \ref{fig:changecontour}.   %Therefore, without careful update of the contour numbers of the affected cells, some portion of the environment might remain unvisited.

\begin{figure}[!t]%{r}{2.4in}
%\vspace{-0mm}
\centering
\includegraphics[height=1.05in]{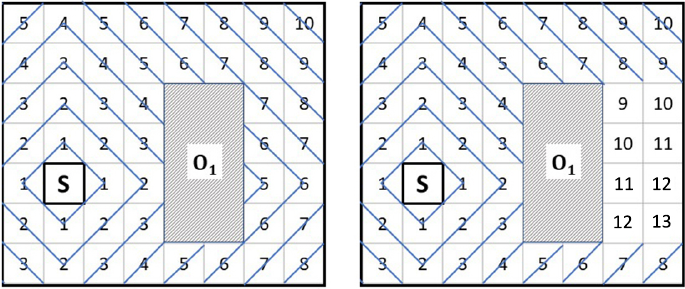}
\vspace{-3mm}
\caption{An illustration of changes on contour numbers}
\label{fig:changecontour}
%\vspace{-3mm}
\end{figure}
We solve this problem in Algorithm \ref{algorithm:onlinecppalg} using an approach where the robot $r$ detects this problem and changes the contour numbers of those cells on-the-fly. See for example the  right of Fig. \ref{fig:changecontour} that depicts how the contour numbers for the cells on the right of $O_1$ are updated on-the-fly by the robot. Details are omitted on how $r$ detects the problem and solves it 
due to space constraints. % and will be provided in the full version. 

\section{Algorithm}
\label{section:algorithm}
%We first discuss the high level ideas of the algorithm and then provide details.
In the description of the algorithm and its analysis, we assume that $L=1$. A simple adaptation will work when $L>1$.
The pseudocode is given in Algorithm \ref{algorithm:onlinecppalg} and illustration of the working principle of the algorithm is given in Fig.~\ref{fig:paths}. 

\subsection{A Naive Approach without Energy Constraint}
The main idea behind our algorithm is to let $r$ incrementally explore the environment $P$ while simultaneously constructing a tree map $T_P$ of $P$ to keep track of the new frontiers that need to be visited by it. 

For simplicity, let us first consider that $T_P$ (or $P$) is known a priori and $r$ has no energy constraint (i.e., $B=\infty$). Let $\cR_{\infty}(r)$ be a route in $T$ that visits all the nodes of $T_P$,  obtained performing a {\em Depth First Search} (DFS) traversal of $T_P$. 
Since $T_P$ is a tree, it is known that all the nodes of $T_P$ can be covered by 
the DFS traversal by visiting each node of $T_P$ at most twice. Therefore, if there are $n$ nodes in $T_P$, then the length of the route $l(\cR(r))\leq 2n$. 
Moreover, an optimal algorithm for $r$ to traverse all $n$ nodes of $T_P$ must have length $l(Q_{OPT}(r))\geq n$, since $r$ can only visit the nodes of $T_P$ sequentially one after another using any algorithm. 
Therefore, without any energy constraint ($B=\infty$), we have a $2$-approximation algorithm. 
The 2-approximation can also be guaranteed for {\cov} when $B=\infty$ since with the knowledge of the global coordinate system, $r$ can visit all the nodes of $T_P$ as if $T_P$ is known a priori, satisfying  the length of the route $l(\cR(r))\leq 2n$.   

\subsection{Incorporating the Energy Constraint}
Now suppose that $r$ has energy budget $B<2n$. The aforementioned algorithm is not sufficient anymore since each route of $r$ can be at most of length $B$.   
Therefore, $r$ needs to return to $S$ to get recharged before the length of the robot's path reaches $B$.
Our proposed algorithm uses the same idea of  performing a DFS traversal of $T_P$ as described in the previous subsection while stopping the DFS traversal process before the route of $r$ has length at most $B$. 
Let $\cR_{\infty}(r)=\{S,v_1,v_2,\ldots,v_l\}$ be the route with respective nodes visited by $r$ while running DFS assuming $B=\infty$. 
Let $\cR_i(r)$ denote a route of $r$ visiting the nodes of $\cR_{\infty}(r)$ when $B<2n$. 
The goal is to obtain $\cR'(r)=(\cR_1(r),\cR_2(r),\ldots,\cR_k(r))$, such that the three conditions listed in Section \ref{section:introduction} are satisfied.

\begin{comment}
\begin{itemize}
    \item the length of each route $C(\cR_i(r))\leq B$
    \item each route $\cR_i(r)$ starts and ends at $S$
    \item $\cR_1(r)\cup\cR_2(r)\cup\ldots\cup \cR_k(r)=\cR(r)$, i.e., the nodes visited by $\cR_i,1\leq i\leq k,$ should collectively cover all the nodes of $T_P$ at least once. 
\end{itemize}
\end{comment}

The challenge is to plan each route $\cR_i(r)$ in an online fashion satisfying all three criteria while minimizing both the number of paths $|\cR'(r)|$ and the total length of the paths $l(\cR'(r))$. We use the following approach: 
$\cR_1(r)$ starts from $S$ and visits the nodes of $\cR_{\infty}(r)$ in a sequence. As soon as $\cR_1(r)$ reaches to a node $v_i\in \cR_{\infty}(r)$ such that $\dist(v_i,S) \leq B_{remain}$, it terminates the DFS traversal and returns to $S$. $B_{remain}$ is the energy remained after each move. In route $\cR_2(r)$, $r$ moves to $v_i$ (where it stopped the DFS traversal in $\cR_1(r)$) from $S$ and continues the DFS traversal until it reaches to a node $v_j\in \cR_{\infty}(r)$ from which $\dist(v_j,S) \leq B_{remain}$. Like last route, $r$ then returns to $S$.  
This process then continues until the last node $v_l\in \cR_{\infty}(r)$ is visited in some route $\cR_k(r)$. We later prove that using this approach,
$r$ visits all nodes of $T_P$ %in the same order as if they were visited by $r$ when $B=\infty$ 
providing correctness and approximation guarantee claimed in Theorem \ref{theorem:main}. 
%satisfying all three criterion,  providing the correctness guarantee of the proposed algorithm. 
%We then show that both  the number of paths $|\cR'(r)|$ and total length  $C(\cR'(r))$ are within a factor of  10 compared to those of the optimal algorithm. % best possible path lenth $|\cR_{OPT}(r)|$ obtained by any optimal algorithm, guaranteeing 10-approximation for the proposed algorithm.  
%coverage of all the reachable cells of $P$ and also guarantee $10$-approximation on the path length compared to the optimal algorithm doing the same. 

\begin{figure}[ht!]%{r}{2.4in}
%\vspace{-0mm}
\centering
\includegraphics[width=1.65in]{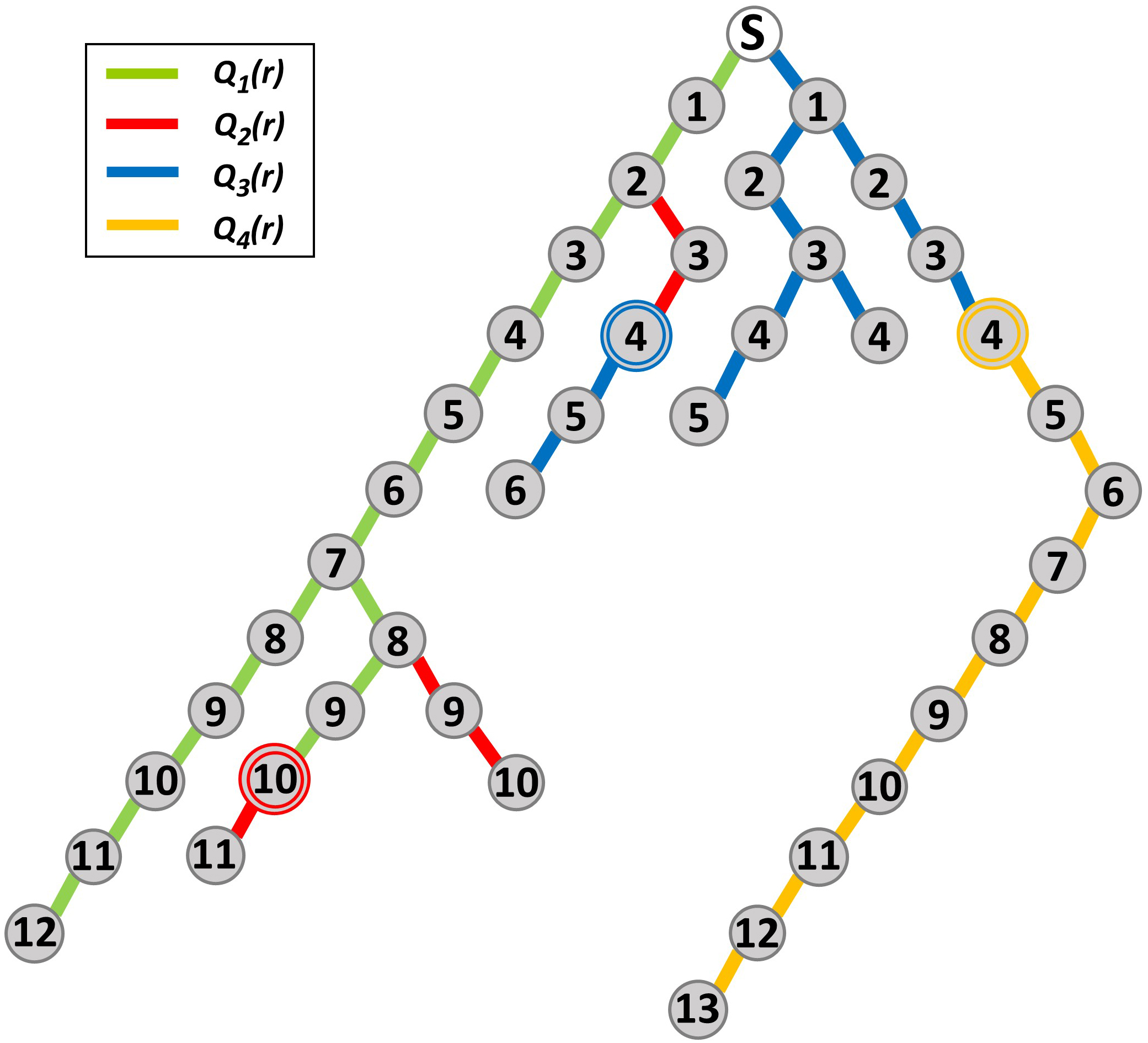}
\includegraphics[width=1.65in]{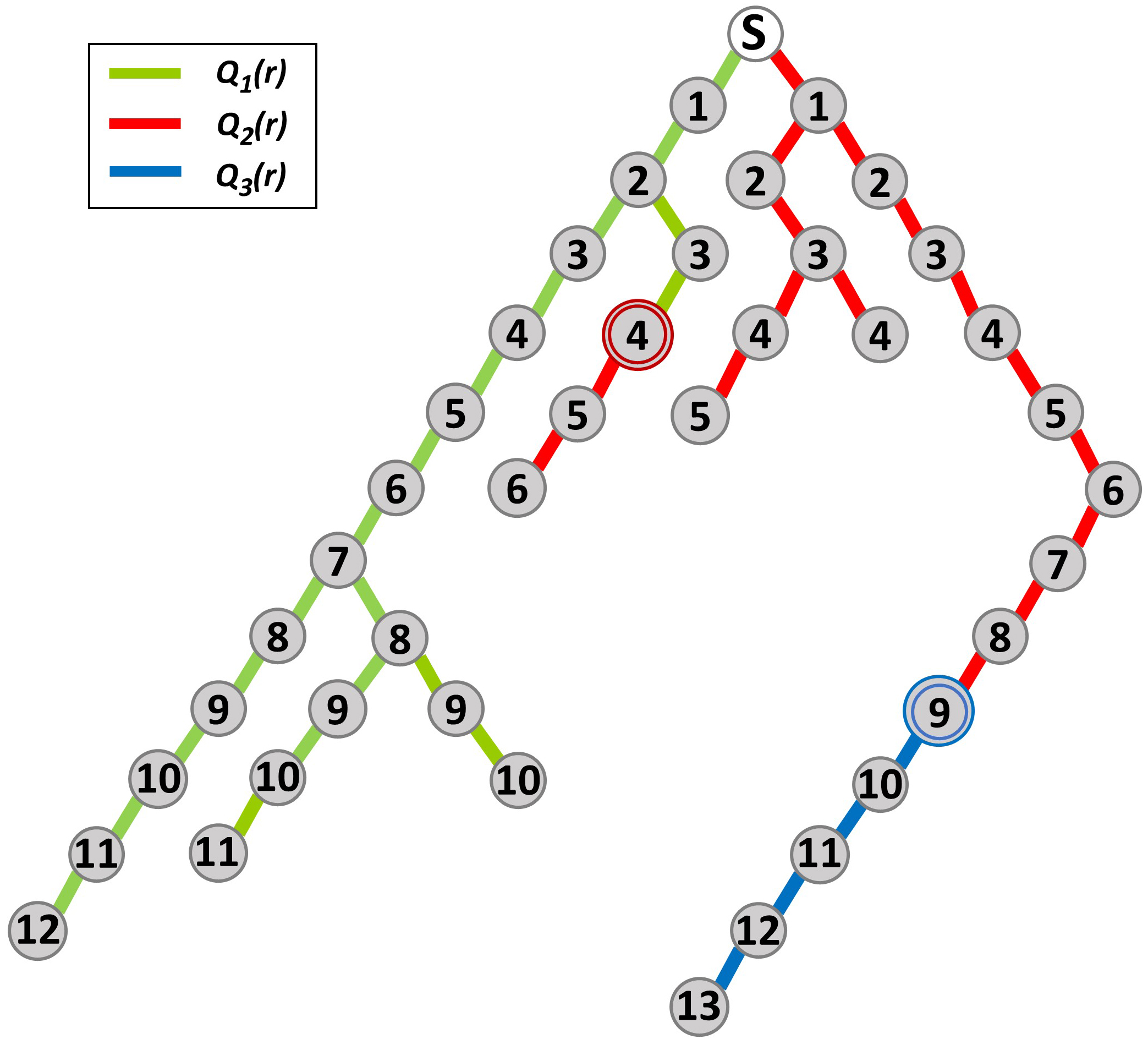}
\vspace{-3mm}
\caption{An illustration of the budgeted DFS traversal by using Algorithm \ref{algorithm:onlinecppalg} when energy budget $B$ is $30$ ({\bf left}) and $40$ ({\bf left}) for the environment $P$ shown in Fig.~\ref{fig:tree}.% until all cells of $P$ (nodes of $T_P$) are visited. 
For $B=30$, $r$ needs $4$ paths whereas only $3$ paths are needed when $B=40$. The nodes of $T_P$ where $\cR_{i-1}(r)$ stops the DFS traversal are marked by double circles; the next path $\cR_{i}(r)$ continues the DFS traversal from these nodes.   
}
\label{fig:paths}
%\vspace{-4mm}
\end{figure}

We call our algorithm {\cova} (shown in Algorithm \ref{algorithm:onlinecppalg}). Initially, the robot $r$ is at $S$ with the energy budget $B<2n$. 
This is a special situation where $v=S$ and $B_{remain}=B$. 
Robot $r$ then includes the child nodes of $S$ (in contour 1) in $T_P$ making them child nodes of root $S$ in $T_P$ and moves to the leftmost child node, say $v_{1,left}$. The node $v_{1,left}$ is marked visited. 
$B_{remain}$ is decreased by 1 (line $21$).
Robot $r$ then moves from $v_{1,left}$ to the leftmost child node $v_{2,left}$ (in contour 2).  The node $v_{2,left}$ is marked visited and $B_{remain}$ is decreased by 1.  This process continues until at some node $v_i$ (in some contour $i$), $B_{remain}=D_{v_i}$, where $D_{v_i}$ is the distance from $v_i$ to $S$ in $T_P$. 
Robot $r$ then returns to $S$ following the path in $T_P$ (line $23$). After getting fully charged at $S$, $r$ follows the path in $T_P$ to reach $v_i$ to continue the DFS traversal.% (see Algorithm \ref{algorithm:onlinecppalg}).

\begin{algorithm}[!t]
{\small
{
{\bf Robot:} Initially positioned at the charging station $S$ and it knows its size $L$ and the energy budget $B$. \\
{\bf Environment:} Planar area with radius at most $\lfloor B/2\rfloor$ with center $S$ possibly containing obstacles; obstacle positions and numbers not known. \\
%{\bf Sensors:} Compass, position sensor, and obstacle-detection sensors capable of locating current position and detecting obstacles in the cells adjacent to the robot's current cell. \\
{\bf Data structures:} Tree map $T_P$ and new frontier stack $F$. \\ %, and unreachable list of cells $U$.\\
{\bf Initialize:} $T_P =\{S\}$, $F=\{S\}$, %$U=\emptyset$, 
distance (i.e., depth in tree $T_P$ for a node $v$ of $T_P$) $D_{v}=0$, 
the energy budget remaining $B_{remain}=B$, and node in $T_P$ to continue coverage in the next route $node_{next}=S$.\\
\While{$F\neq \emptyset$} {

%{\Indp 
\If{$B$ is not even}
{$B_{remain}=B-1$;}
Move to $node_{next}$ from $S$ using the shortest path in $T_P$;\\
$D_v\leftarrow$ the distance from $S$ to $node_{next}$ in $T_P$;\\
$B_{remain}\leftarrow B_{remain}-D_v$;\\
\While{$B_{remain}> D_v$}{
%{\Indp
\eIf{$node_{next}$ has unvisited child nodes in $T_P$ or the cell on top of $F$ is the child node of $node_{next}$ in $T_P$} {
%{\Indp
\If{the child nodes of $node_{next}$ are not already included in $F$ and $T_P$} {
%{\Indp 
Include all the child nodes of $node_{next}$ in $T_P$ and $F$ (ordered clockwise from left to right starting from the leftmost child node and insert to $F$ from right to left);\\
}
$v\leftarrow$ the node on the top of $F$ or the leftmost in $T_P$ ($v$ will be the node pushed into $F$ last);\\
Robot $r$ removes $v$ from $F$, moves to $v$, marks $v$ visited in $T_P$;\\
}
%{\bf else}\\
%{\Indp 
{
$v\leftarrow$  the parent node of $node_{next}$ in $T_P$;\\
Robot $r$ moves to $v$;\\
}
$node_{next}\leftarrow v$;\\
$B_{remain}\leftarrow B_{remain}-1$;\\
$D_v\leftarrow$ the distance from $S$ to $node_{next}$;\\
}
Robot $r$ goes to $S$ following a path in $T_P$;\\
$B_{remain}\leftarrow B$ (after $r$ is fully changed) after reaching $S$;\\
}
}
\caption{{\cova}} 
\label{algorithm:onlinecppalg}
}
\end{algorithm}	
%\setlength{\textfloatsep}{2pt}

%The DFS traversal is done through {\em forward} and {\em backtrack} phases. 
At any cell of $P$ (node of $T_P$), if it has a unvisited neighbor cell in the next contour (child node in $T_P$), then $r$ moves to that cell. %If any cell of $P$ (node of $T_P$) does not have any neighbor cells in the next contour to visit (or child node in $T_P$), 
Otherwise, $r$ retreats back to the parent node cell of its current cell in $T_P$ (lines $18-19$). %$P$ (node of $T_P$) from which it entered the current cell (called the backtrack phase).  
%The DFS traversal is a renowned approach which is defined as a walk followed by a robot starting at node $v$ building $T_P$ based on the DFS approach. 
%In the DFS traversal through forward and backtrack phases in {\cova},
During the exploration, anytime $r$ realizes that it has just enough energy remaining $B_{remain}$ to reach back to $S$, it does so by visiting the parent nodes in the tree $T_P$ starting from its current node.% that has been built.

\section{Analysis of the Algorithm}
\label{section:analysis}
In this section, we provide theoretical analysis of {\cova}. We first prove its correctness and then analyze the costs for the performance metrics (1) and (2). 

%\vspace{1mm}
\noindent{\bf Correctness.}
%\subsection{Correctness Proof of the Algorithm}
We start with the following lemma.
\begin{lemma}
\label{lemma:same-order}
Let $\{S,v_1,v_2,\ldots,v_l\}$ be the order of the nodes of $T_P$ (i.e., cells in $P$) visited by the DFS traversal $\cR_{\infty}(r)$ when $B=\infty$. Let $\cR'(r)=\{\cR_1(r),\cR_2(r),\ldots, \cR_k(r)\}$ be the routes of $r$ that collectively visit the nodes of $T_P$  at least once when $B<2n$. The not-yet-visited nodes of $T_P$ (or cells of $P$) are visited in $\cR'(r)$ in the same order as in $\cR_{\infty}(r)$.
\end{lemma}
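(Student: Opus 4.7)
The plan is to show that Algorithm~\ref{algorithm:onlinecppalg} simulates the unconstrained DFS $\cR_{\infty}(r)$ exactly, and that the only additional moves in $\cR'(r)$ are ``round trips'' from some cell $v_i$ back to $S$ and from $S$ back to $v_i$ along already-explored tree edges, which do not visit any new cell. The central observation I would exploit is that three pieces of state---the tree map $T_P$, the frontier stack $F$, and the pointer $node_{next}$---are preserved across recharges (only $B_{remain}$ is reset). Therefore the DFS state when the robot re-arrives at $node_{next}$ from $S$ is identical to the state at which the DFS was paused.

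I would then prove the claim by induction on the index $j$ of the $j$-th newly visited cell. The base case ($j=1$) holds because both traversals begin at $S$, push the children of $S$ onto $F$ in the same right-to-left clockwise order (so that the leftmost child is on top), and pop that top as the first new cell visited. For the inductive step, assume that the first $j$ newly visited cells of $\cR'(r)$ coincide in order with $v_1,\ldots,v_j$ from $\cR_{\infty}(r)$, and that at the moment $v_j$ is marked visited the two traversals share the same $T_P$, the same $F$, and the same current cell. The next action in $\cR_{\infty}(r)$ is either to descend to the leftmost unvisited child (pop from $F$) or, if no such child exists, to retreat to the parent and pop there. The inner loop of Algorithm~\ref{algorithm:onlinecppalg} makes exactly the same choice, so $v_{j+1}$ is identified as the same cell in both traversals.

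The main obstacle I expect is verifying that a recharge inserted between the visit to $v_j$ and the visit to $v_{j+1}$ does not perturb the state. The subtlety is twofold: when the robot returns from $S$ to $node_{next}=v_j$, the algorithm's guard preventing it from re-pushing already-discovered children of $node_{next}$ must be correct, and the top of $F$ must still point to the next DFS frontier. A short case analysis, splitting on whether the interruption occurred while $v_j$ still had unvisited children or while the algorithm was about to retreat to its parent, should show that in either case the algorithm resumes with the correct top-of-stack and hence selects the same $v_{j+1}$. Together with the contour-renumbering fix of Section~\ref{section:basic} (Fig.~\ref{fig:changecontour}), which ensures that $T_P$ is built consistently even near obstacles, these pieces combine to yield the lemma.
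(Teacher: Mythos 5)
Your proposal is correct and rests on the same underlying idea as the paper's own proof: the budgeted routes differ from the unconstrained DFS $\cR_{\infty}(r)$ only by round trips to $S$ along already-explored tree edges, which visit no new cells, so deleting those round trips recovers $\cR_{\infty}(r)$ and the order of first visits is unchanged. The paper states this as a brief one-paragraph simulation argument, whereas you flesh it out with an explicit induction on the preserved DFS state ($T_P$, $F$, $node_{next}$) across recharges --- a more rigorous rendering of the same proof rather than a different route.
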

\begin{proof}
Consider the paths in $\cR'(r)$ when $B=\infty$. In this case, instead of stopping the DFS traversal at some node $\alpha$ and make a round trip to $S$ from $\alpha$, each subsequent paths continue  their traversal without this stoppage. This simulates essentially the behavior of $r$ when $B=\infty$ giving $\cR_{\infty}(r)$ and hence the nodes of $T_P$ (or cells in $P$) are visited in the same order in both (except the nodes visited in the roundtrip to $S$ from the stopped node $\alpha$).   
\end{proof}

\begin{theorem} [{\bf Correctness}]
\label{theorem:correctness}
{\cova} completely covers the environment $P$.
\end{theorem}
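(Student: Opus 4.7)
The plan is to reduce correctness to two facts: (i) every reachable cell of $P$ eventually appears as a node of the tree map $T_P$ built on the fly, and (ii) the budgeted DFS traversal visits every node of $T_P$ before the outer while loop of Algorithm~\ref{algorithm:onlinecppalg} terminates.

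For (i), I would argue from the tree construction in Section~\ref{section:tree}. Each reachable cell $c$ lies on some contour $C_d$ with $d=\dist(S,c)\le\lfloor B/2\rfloor$ by Definition~\ref{definition:reachable}, and the construction inserts a cell of $C_d$ as a child of some cell of $C_{d-1}$ whenever the two are neighbors. Combining this with condition~(c) of Definition~\ref{definition:reachable}, which guarantees a free-cell path from $S$ to $c$, I would prove by induction on $d$ that $c$ is eventually added to $T_P$. The contour-renumbering fix sketched in Fig.~\ref{fig:changecontour} is precisely what makes the inductive step go through when the naive Manhattan contour would otherwise hide $c$ behind an obstacle.

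For (ii), I would invoke Lemma~\ref{lemma:same-order}: the cells of $P$ are uncovered by $\cR'(r)$ in the same order as by the unbounded DFS $\cR_{\infty}(r)$, and a DFS on a tree visits every node. So the only way correctness could fail is if the outer while loop of Algorithm~\ref{algorithm:onlinecppalg} terminates with the frontier stack $F\neq\emptyset$, i.e., the algorithm aborts while unvisited frontier cells remain. I would rule this out using the depth bound: each node $v\in T_P$ satisfies $D_v\le\lfloor B/2\rfloor$, so the round trip $S\to v\to S$ costs at most $2\lfloor B/2\rfloor\le B$ (the parity adjustment on line~7 of the algorithm absorbs the odd case). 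Hence whenever the robot stops at some $\alpha$ with $B_{remain}=D_\alpha$ and retreats to $S$ along the tree path, a full recharge at $S$ lets it walk back to $\alpha$ using $D_\alpha$ units and continue.

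The main obstacle I expect is showing that each resumed route $\cR_i(r)$ makes \emph{strict} progress, i.e., explores at least one new edge of $T_P$ beyond $\alpha$ before the next forced return; otherwise the outer loop could in principle cycle forever on the same $node_{next}$. For an interior node this is immediate because $D_\alpha<\lfloor B/2\rfloor$ gives $B-D_\alpha>D_\alpha$, so the inner-loop guard $B_{remain}>D_v$ is satisfied at least once after arrival and a new child of $\alpha$ is pushed and visited. The delicate case is a boundary node at depth exactly $\lfloor B/2\rfloor$: here I would observe that such a node has no children in $T_P$, since any child would lie on contour $\lfloor B/2\rfloor+1$ and violate the reachability radius of Definition~\ref{definition:reachable}; hence $\alpha$ was already marked visited in the route that reached it, nothing on $F$ depends on further exploration from $\alpha$, and the DFS can validly backtrack in the next route. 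Combining strict progress with finiteness of $T_P$ (again from the radius bound in Definition~\ref{definition:reachable}), the outer loop halts only when $F=\emptyset$, at which point every reachable cell of $P$ has been visited, establishing the theorem.
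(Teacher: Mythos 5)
Your proof follows the same backbone as the paper's: reduce the budgeted traversal to the unbounded DFS via Lemma~\ref{lemma:same-order}, and use the fact that a DFS visits every node of $T_P$. But you make explicit two steps that the paper's three-sentence proof simply asserts. First, your part~(i) --- that every reachable cell is eventually inserted into $T_P$, by induction on contour depth using condition~(c) of Definition~\ref{definition:reachable} and the contour-renumbering fix --- appears in the paper only as the bare claim that an unbounded DFS ``is guaranteed to visit'' each reachable cell. Second, and more substantially, your termination and strict-progress analysis has no counterpart in the paper at all: the paper implicitly assumes the outer loop of Algorithm~\ref{algorithm:onlinecppalg} halts with $F=\emptyset$, whereas you correctly identify that one must show each resumed route advances the DFS. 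Your parity observation (line~7 forces $B_{remain}-D_v$ to stay even, so passing the guard always leaves enough energy to return to $S$) is exactly the invariant needed and is worth keeping.

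One caveat on your ``delicate case,'' which is the only genuine soft spot. If a route ends at a node $\alpha$ with $D_\alpha=\lfloor B/2\rfloor$ and $B$ even (which can happen, e.g., on a straight descent from $S$ along a depth-$B/2$ branch), then on the next route the robot arrives at $\alpha$ with $B_{remain}=B-D_\alpha=D_\alpha$, the inner-loop guard $B_{remain}>D_v$ fails immediately, and the loop body --- including the backtracking branch on lines~18--19 --- never executes; $node_{next}$ is never updated and the algorithm as literally written livelocks even though $F\neq\emptyset$. Your resolution (``the DFS can validly backtrack in the next route'') presupposes that the backtrack step runs, which the pseudocode does not guarantee. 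This is arguably a defect of the pseudocode rather than of your argument --- ending the route one step earlier, or setting $node_{next}$ to the parent of $\alpha$ before returning to $S$, repairs it --- but a complete proof must either patch the algorithm or show a route never terminates at depth exactly $\lfloor B/2\rfloor$. Since the paper's proof is silent on this entire issue, your attempt is the more honest of the two about where the real difficulty lies; just close this last case explicitly.
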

\begin{proof}
When $P$ is known a priori, $r$ can visit all the reachable cells in $P$ with an unlimited budget. % running a DFS traversal, each reachable cell of $P$ is guaranteed to be visited by $r$ when $B=\infty$. 
If $P$ is unknown but $B=\infty$, it is also known that through a DFS traversal, each reachable cell of $P$ is guaranteed to be visited where the traversal path is represented by $\cR_{\infty}(r)=\{S,v_1,v_2,\ldots,v_l\}$. 
%Let $\cR_{\infty}(r)=\{S,v_1,v_2,\ldots,v_l\}$ be the order in which the nodes of $T_P$ are visited in the case of $P$ is not known but $B=\infty$. 
We have proved in Lemma \ref{lemma:same-order} that when $P$ is unknown but $B < 2n$, the nodes of $T_P$ are visited in the same order. This immediately provides the guarantee that all reachable cells of $P$ will be visited by $r$. Hence, proved.
\end{proof}

\vspace{1mm}
\noindent{\bf Approximation Ratio.} We prove the following theorem. % for {\cova}. 

\begin{theorem}[{\bf Approximation}]
\label{theorem:approximation}
{\cova} achieves $10$-approximation for both performance metrics -- the number of paths and the total lengths of the paths.
\end{theorem}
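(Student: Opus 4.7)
My plan is to reduce both halves of the theorem to a single inequality of the form $l_{ALG} \leq 10\,l_{OPT}$, writing $l_{ALG}, l_{OPT}, k_{ALG}, k_{OPT}$ for the lengths and path-counts produced by {\cova} and the optimal algorithm respectively. Two almost-trivial OPT lower bounds will suffice: $l_{OPT} \geq n$, the number of reachable cells, since OPT must visit each of them; and $k_{OPT} \geq l_{OPT}/B$, since each OPT path has length at most $B$. Because {\cova} spends its full budget $B$ in every round except possibly the last, $l_{ALG} \geq (k_{ALG}-1)B$, so the length bound $l_{ALG} \leq 10\,l_{OPT}$ propagates to $k_{ALG} \leq 10\,k_{OPT}$ up to the last-round rounding.

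Next I would decompose a round of {\cova}. Writing $d_i$ for the depth in $T_P$ of the stop node $v_{s_i}$ of round $i$ (with $d_0 := 0$), each round consists of three segments: a walk in $T_P$ from $S$ down to the continuation node at depth $d_{i-1}$, a DFS walk of length $W_i$ on $T_P$ ending at $v_{s_i}$, and a walk back from $v_{s_i}$ to $S$ along $T_P$. Summing over rounds and noting that the per-round DFS walks concatenate into a single DFS traversal of $T_P$, which uses each tree edge at most twice, gives
\[
l_{ALG} \;=\; \sum_i W_i \;+\; 2 \sum_{i=1}^{k_{ALG}-1} d_i \;\leq\; 2n \;+\; 2 \sum_{i=1}^{k_{ALG}-1} d_i .
\]
So everything reduces to showing the overhead $\sum_{i=1}^{k_{ALG}-1} d_i$ is at most $4n$, from which $l_{ALG} \leq 2n + 8n = 10n \leq 10\,l_{OPT}$.

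To bound the overhead I plan to charge it to tree edges via the identity
\[
\sum_{i=1}^{k_{ALG}-1} d_i \;=\; \sum_{e \in T_P} \bigl|\{i : v_{s_i}\ \text{lies in the subtree below}\ e\}\bigr|,
\]
which holds because the depth of $v_{s_i}$ equals the number of tree edges on the path from $S$ to $v_{s_i}$. For a fixed edge $e$ with subtree size $s_e$, the DFS order visits that subtree in one contiguous interval of walk-length at most $2 s_e$, and each round contributes at most $B$ to the DFS walk, so the number of rounds stopping inside this subtree is $O(s_e/B) + O(1)$. Interchanging sums and using $\sum_e s_e = \sum_v d(v) \leq n \lfloor B/2 \rfloor$ (every reachable cell has depth at most $\lfloor B/2 \rfloor$ by Definition \ref{definition:reachable}) yields $\sum d_i = O(n)$, and careful bookkeeping pins the constant down to $4$. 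The main obstacle will be this bookkeeping step: the one or two rounds per subtree that enter or exit its DFS interval mid-round must be handled carefully, and the between-round walks from $v_{s_i}$ back to $S$ and forward again to the continuation node must be distinguished from the actual DFS walk so that the concatenation argument remains clean. A looser analysis would already yield an $O(1)$-approximation at once, but squeezing the constant down to $10$ requires precisely this care.
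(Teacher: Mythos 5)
Your reduction collapses both metrics to the single inequality $l_{ALG}\leq 10\,l_{OPT}$ and then tries to establish it via $l_{ALG}\leq 2n+2\sum_i d_i\leq 10n$ together with $l_{OPT}\geq n$. That intermediate target is false, so no amount of bookkeeping can rescue the plan. Take $T_P$ to be a ``broom'': a path from $S$ to a node $u$ at depth close to $\lfloor B/2\rfloor$, with $m$ leaves attached to $u$. Then $n\approx B/2+m$, but every round of {\cova} (and of any algorithm) spends almost the whole budget $B$ commuting to depth $\approx B/2$ and back while harvesting only $O(1)$ new leaves, so $l_{ALG}=\Theta(mB)$, which exceeds $10n=\Theta(B+m)$ by an unbounded factor; equivalently $\sum_i d_i\approx m\cdot B/2\gg 4n$. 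The same example breaks the charging step: for the edge $e$ incident to $S$ the subtree size is $s_e=n-1$, yet $\Theta(m)$ rounds stop inside it, not $O(s_e/B)+O(1)$. The inequality you invoke there is inverted --- ``each round contributes at most $B$ of DFS walk'' gives a \emph{lower} bound on the number of rounds meeting a subtree's DFS interval; an upper bound would need each such round to make $\Omega(B)$ DFS progress inside the subtree, which fails exactly when stop nodes are deep and the budget is consumed by the walks to and from $S$.

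The missing idea is a depth-sensitive lower bound on the optimum: OPT must also pay the round trips, so a route that covers any cell at depth $d$ retains at most $B-2d$ budget for new coverage, and ALG's commuting overhead $\sum_i d_i$ has to be charged against depth-weighted quantities such as $\sum_v d(v)$ rather than against $n$. The paper does not attempt your self-contained route at all: its proof checks (via Lemma \ref{lemma:same-order} and the depth bound $\lfloor B/2\rfloor$ on $T_P$) that {\cova} realizes an online budgeted DFS of a tree, and then imports the bounds $|\cR_{ALG}(r)|\leq 10\,|\cR_{OPT}(r)|$ and $l(\cR_{ALG}(r))\leq 10\,l(OPT(r))$ wholesale from the cited piecemeal tree-exploration result of Das et al. If you want to avoid that citation you would need to reconstruct its depth-weighted comparison between ALG's rounds and OPT's rounds; the bound $l_{OPT}\geq n$ by itself cannot yield any constant-factor guarantee.
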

\begin{proof}
Let $T$ be a tree of depth at most $\lfloor B/2\rfloor$. Let $r$ be a robot with energy budget at least $B$. 
After $r$ starts from $S$ to visit the nodes of $T$, due to the limited energy budget, $r$ may need to stop the coverage of $T$ and visit the charging station $S$ again before rest of the nodes in $T$ can be covered. 

Let $OPT$ be the DFS exploration strategy for $r$ that consists of the minimum number of routes, i.e., the minimum number of times $r$ needs to visit $S$ before $T$ is completely covered. %all the nodes of $T$ are eventually visited at least once). 
Let $ALG$ be the DFS exploration strategy 
that visits the nodes of $T$ (starting from $S$) using a DFS traversal where length of each route is bounded by $B$. As soon as the battery is fully charged at $S$, in the next route, $r$ directly goes to the node of $T$ where it stopped the DFS traversal in the last route and continues covering the unvisited nodes of $T$.   
For any tree $T$ of depth $\lfloor B/2\rfloor$ and any robot $r$ of energy budget at least $B$, 
we have the following result from \cite{Das2018} on the number of routes $|\cR_{ALG}(r)|$, of the strategy $ALG$, compared to the number of routes $|\cR_{OPT}(r)|$, of strategy $OPT$: $|\cR_{ALG}(r)|\leq 10\cdot |\cR_{OPT}(r)|$. 
Moreover, let $l(\cR_{ALG}(r))$ be the total length traversed by $r$ while using the strategy $ALG$. Let $l(OPT(r))$ be the optimal length traversed by $r$. %while using the strategy $OPT$. 
Again from \cite{Das2018}, we have that $l(\cR_{ALG}(r)) \leq 10\cdot l(OPT(r))$.

The above results are interesting meaning that the bounds hold for any arbitrary DFS traversal $\cR(r)$ of $T$ by $r$. That means that the whole DFS traversal $\cR(r)$ does not need to be known beforehand (i.e., can be computed online not knowing $T$ in advance). Moreover, each route can be constructed without any knowledge on the yet unvisited part of $T$. 

We now discuss how the two results can be adapted to prove the same bounds for {\cova}. Consider a DFS traversal $\cR_\infty(r)$ of $P$ (or equivalently $T_P$) by $r$ when $B=\infty$. We have from Lemma \ref{lemma:same-order} that the routes in $\cR'(r)$ visit the not-yet-visited nodes of $P$ in the order same as in $\cR(r)$. Moreover, the tree map $T_P$ of $P$ formed during the exploration is of depth at most $\lfloor B/2\rfloor$. Therefore, $|\cR'(r)|\leq 10\cdot |\cR_{OPT}(r)|$ and $l(\cR'(r))\leq 10\cdot l(OPT(r))$.   
\end{proof}

\vspace{1mm}
\noindent{\bf Proof of Theorem \ref{theorem:main}:} Theorems \ref{theorem:correctness} and  \ref{theorem:approximation} prove Theorem \ref{theorem:main} for $L=1$. Since the cells are decomposed proportional to the robot size $L\times L$, a simple adaptation of the analysis again gives $10$-approximation for {\cova} for $L>1$. The correctness analysis remains unchanged. \qed

%\change{AD has edited up to this. 2/27, 1.55 pm}
\begin{figure*}[!t]%{r}{2.4in}
\centering
\begin{tabular}{ccccc}
\hspace{-0.2in}
\includegraphics[width=0.2\linewidth]{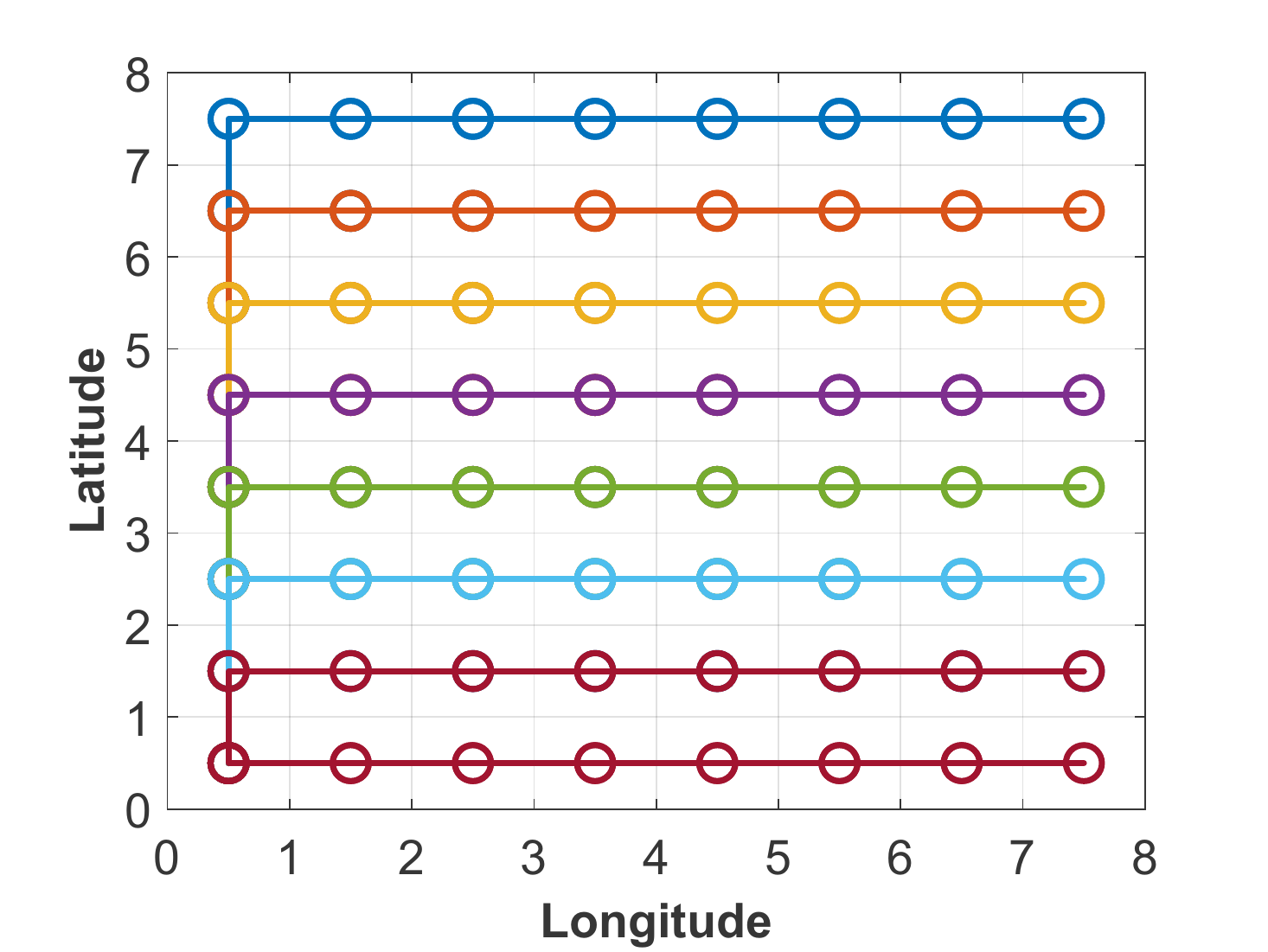}&
\hspace{-0.2in}\includegraphics[width=0.2\linewidth]{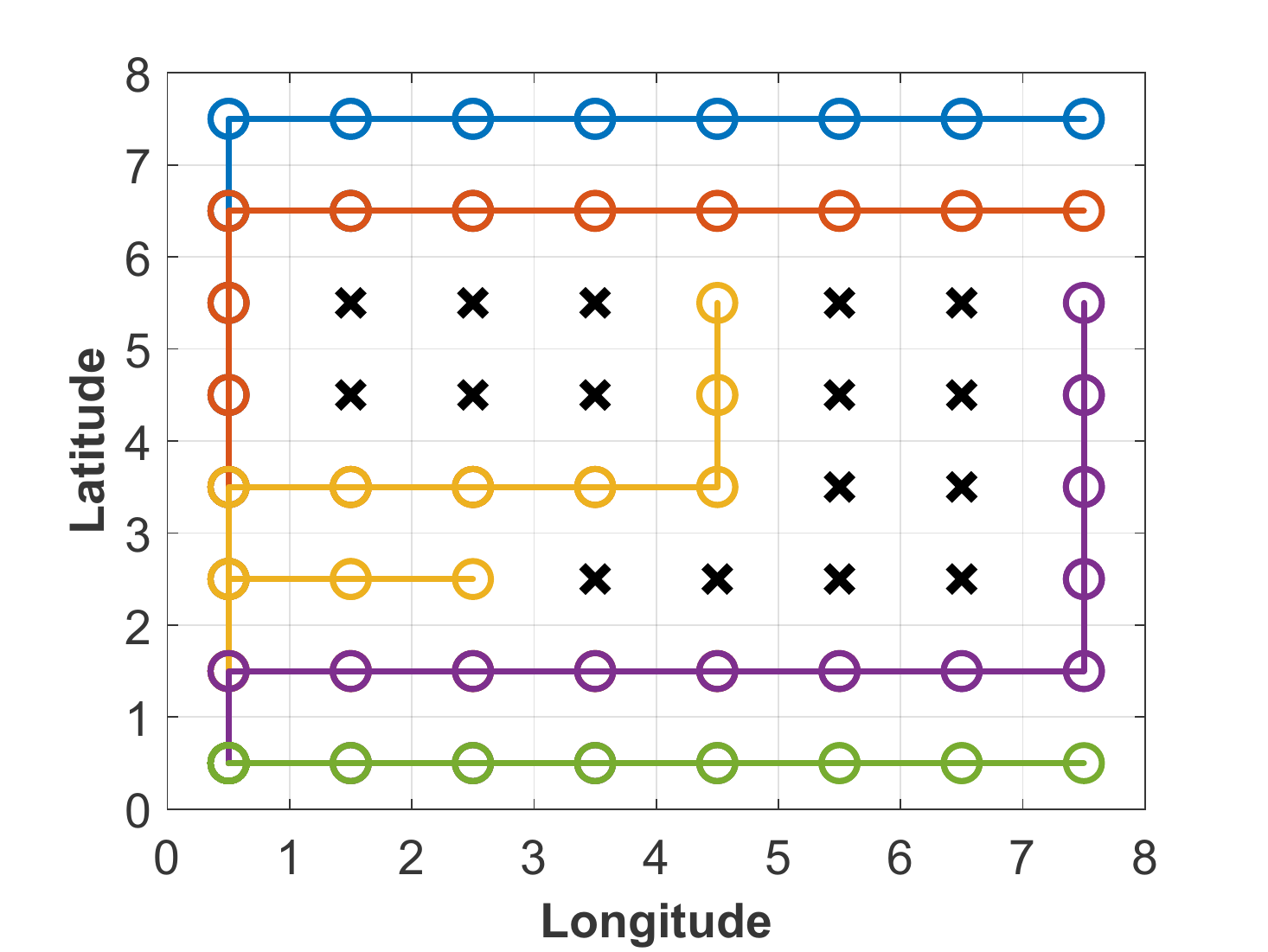}&
%\hspace{2mm}
\hspace{-0.2in}\includegraphics[width=0.2\linewidth]{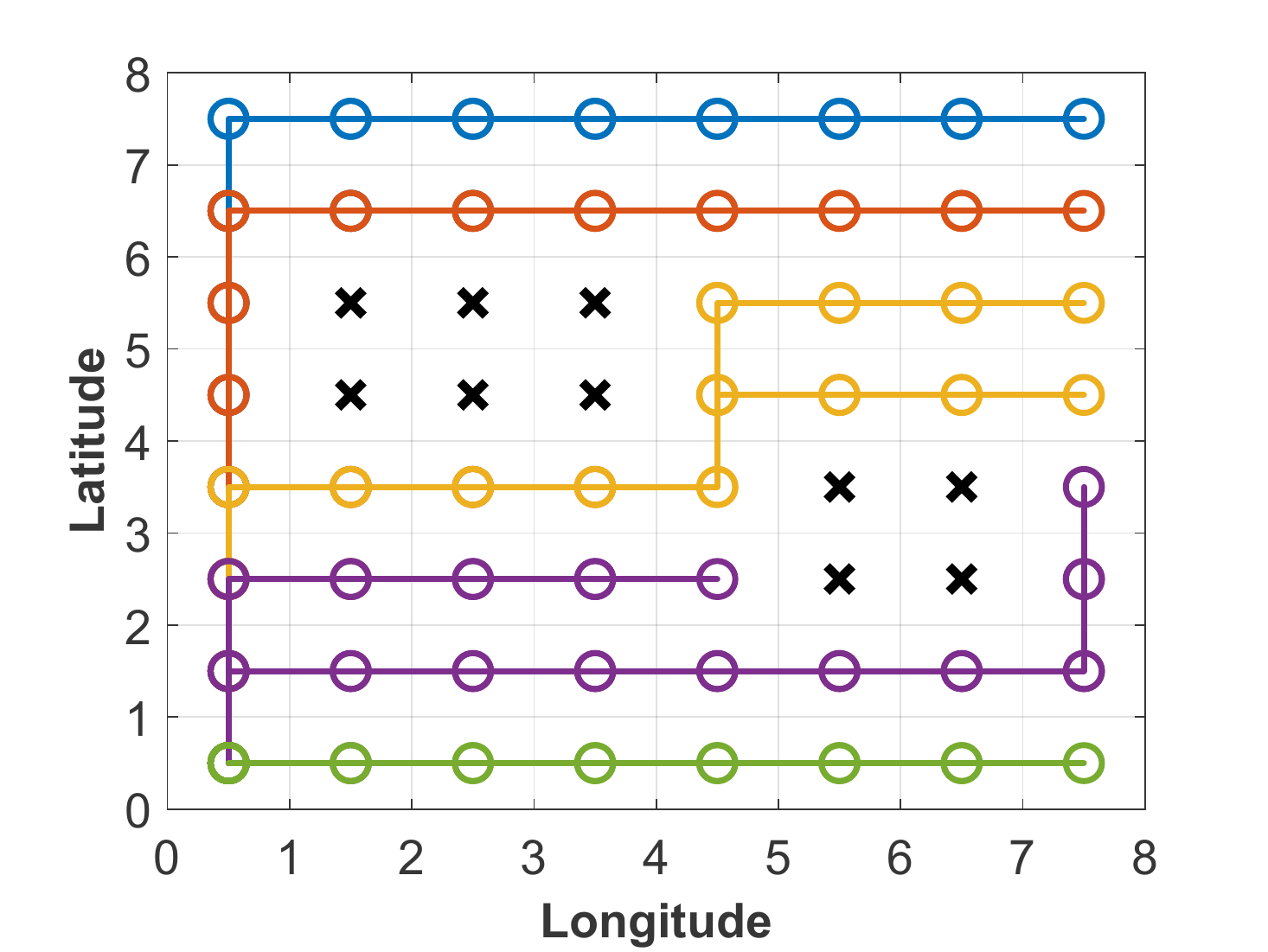}&
%\hspace{2mm}
\hspace{-0.2in}\includegraphics[width=0.2\linewidth]{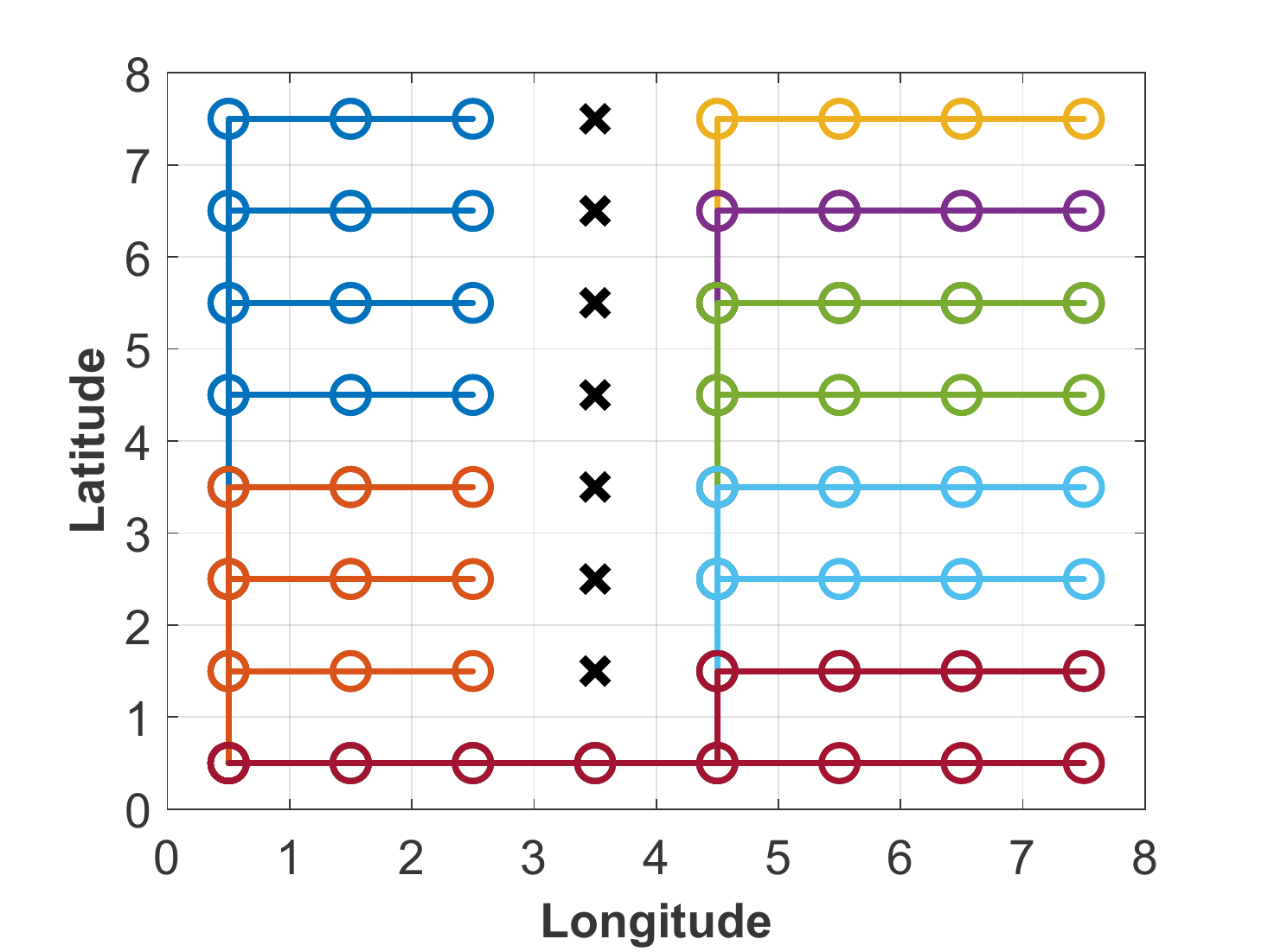}&
%\hspace{2mm}
\hspace{-0.2in}\includegraphics[width=0.2\linewidth]{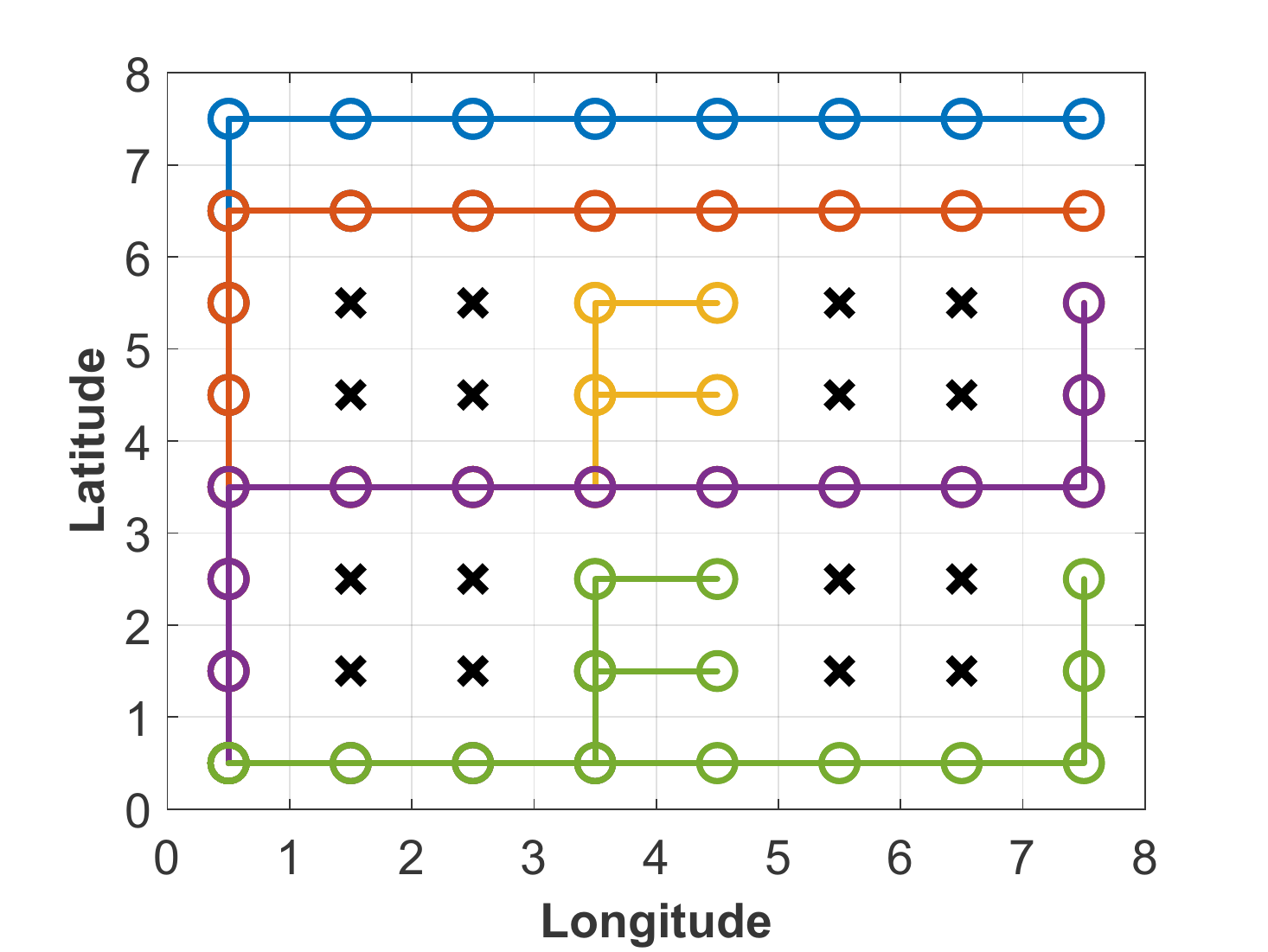}\\
\end{tabular}
\vspace{-2mm}
\caption{An illustration of five different configurations (Conf1 to Conf5) and the paths followed by $r$ using {\cova}. The obstacles are represented with `x'. $S$ is in the bottom-left corner cell. When plotted, later paths have been given higher priority and they are shown in the foreground.
}
\label{fig:show_paths}
\end{figure*}

\section{Evaluation}
\label{section:simulation}
\noindent{\bf Settings.}
We have implemented the proposed algorithm {\cova} using Java programming language on a desktop computer with an Intel i7-7700 CPU and 16GB RAM. The robot size $L$ is set to $1$. We have created five different environments with both convex and concave obstacles in them. The test environments are of the same dimension -- $8 \times 8$ $(l=8)$. The budget $B$ is set to $4l$ (unless otherwise mentioned), i.e., four times the size of each side of the environment. Note that this is the lowest possible budget to completely cover the environment. The charging station $S$ is placed at the left-bottom corner in every test environment. These environments are shown in Fig.~\ref{fig:show_paths} (Conf1 to Conf5). 

Empirically, we have mainly focused on three metrics to evaluate the quality of the proposed algorithm: 1) time to cover the environment, 2) total path length traversed by the robot, and 3) approximation ratio. We also compare our results against the current state-of-the-art algorithm that solves {\cov} under energy constraint \cite{Sharma2019}. %\change{I plan to submit a video showing the animation of followed paths as well. I will create the video on Wednesday.} \change{\bf Ayan, this will be great}

%\vspace{1mm}
\noindent{\bf Results.}
First we empirically verify the theoretically-proved constant-factor approximation bound. Let $n$ denote the number of reachable cells in the environment. Then $MIN = \frac{2n}{B}$ will indicate the {\em absolute minimum} number of paths required by the robot to completely cover the environment \cite{Sharma2019, WeiI18}. No optimal DFS strategy ($OPT$) can guarantee a better approximation bound than $MIN$. Here, we compare our experimental result against $MIN$ as a comparison against any $OPT$ cannot make our empirical approximation bound worse. %As discussed earlier, the approximation bound is provided based on this value $MIN$. 
The result is shown in Fig. \ref{fig:approx}(a). The state-of-the-art bound of $\log(B)$ (when $L=1$) is also plotted for reference. The figure shows that in practice, the approximation bound is well below the $10$-factor theoretical worst-case bound. Also, in all of the test environments, our proposed algorithm outperforms the state-of-the-art  $\log(B)$-approximation \cite{Sharma2019}. 

\begin{figure}%[ht!]
\begin{center}
\begin{tabular}{cc}
\hspace{-0.15in}\includegraphics[width=0.5\linewidth]{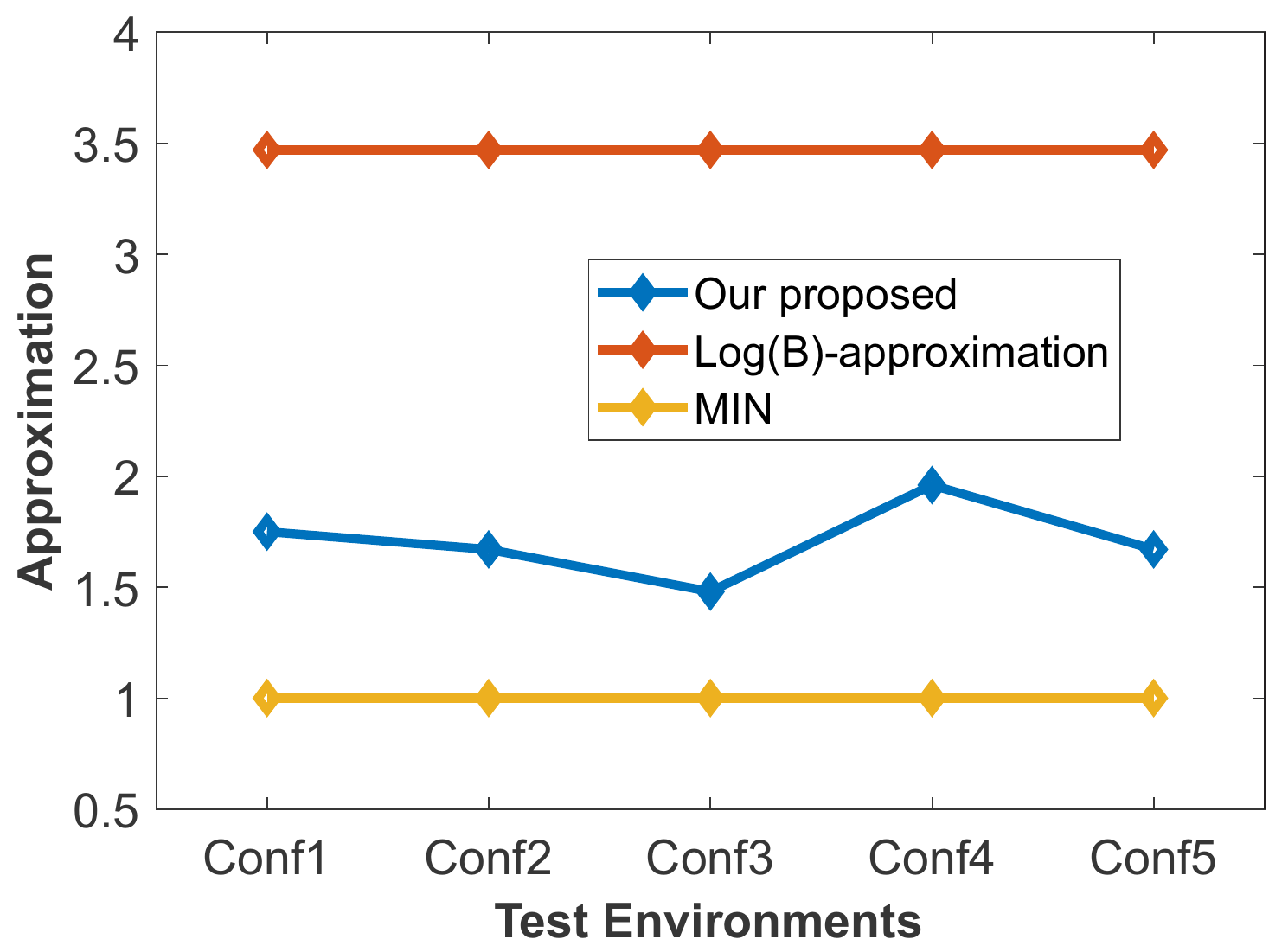}&
\includegraphics[width=0.5\linewidth]{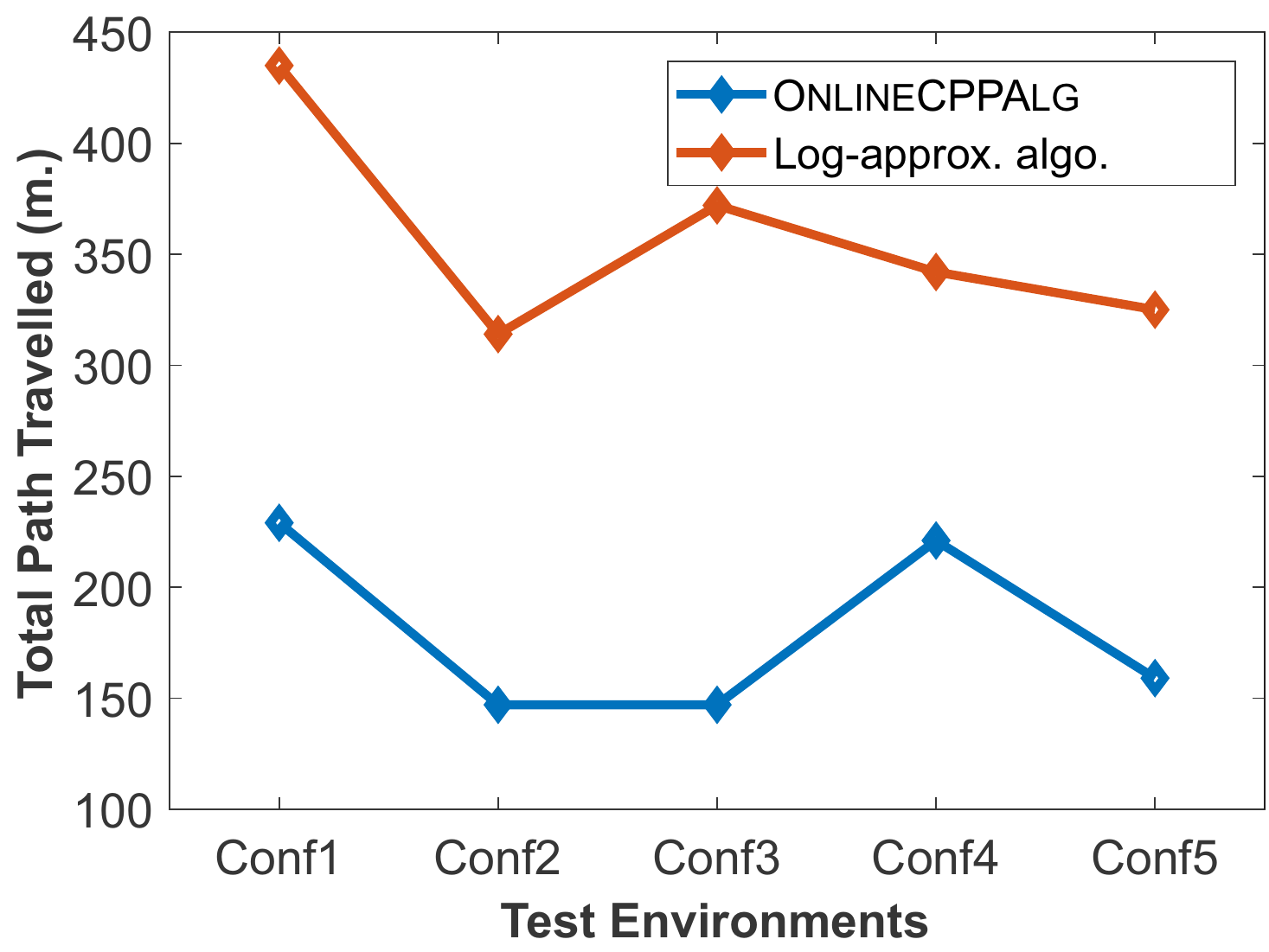}\\
(a)&(b)\\
\end{tabular}
\end{center}
\vspace{-4mm}
\caption{a) Empirical validation of the constant-factor approximation on different environment configurations; b) Comparison of path lengths travelled by the robot using our algorithm and the algorithm in \cite{Sharma2019}.}
\label{fig:approx}
\end{figure}

As we are minimizing the total path length travelled by the robot to completely cover the environment, we are interested to compare this metric for our algorithm against the algorithm in \cite{Sharma2019}. The result is shown in Fig. \ref{fig:approx}(b). It can be clearly observed from this plot, that our proposed algorithm outperforms the algorithm in \cite{Sharma2019} in terms of the travelled path length -- by an average ratio of $2.03$ while the maximum ratio is $2.53$ (Conf3). 

\begin{figure}%[ht!]
\begin{center}
\begin{tabular}{cc}
\hspace{-0.15in}\includegraphics[width=0.55\linewidth]{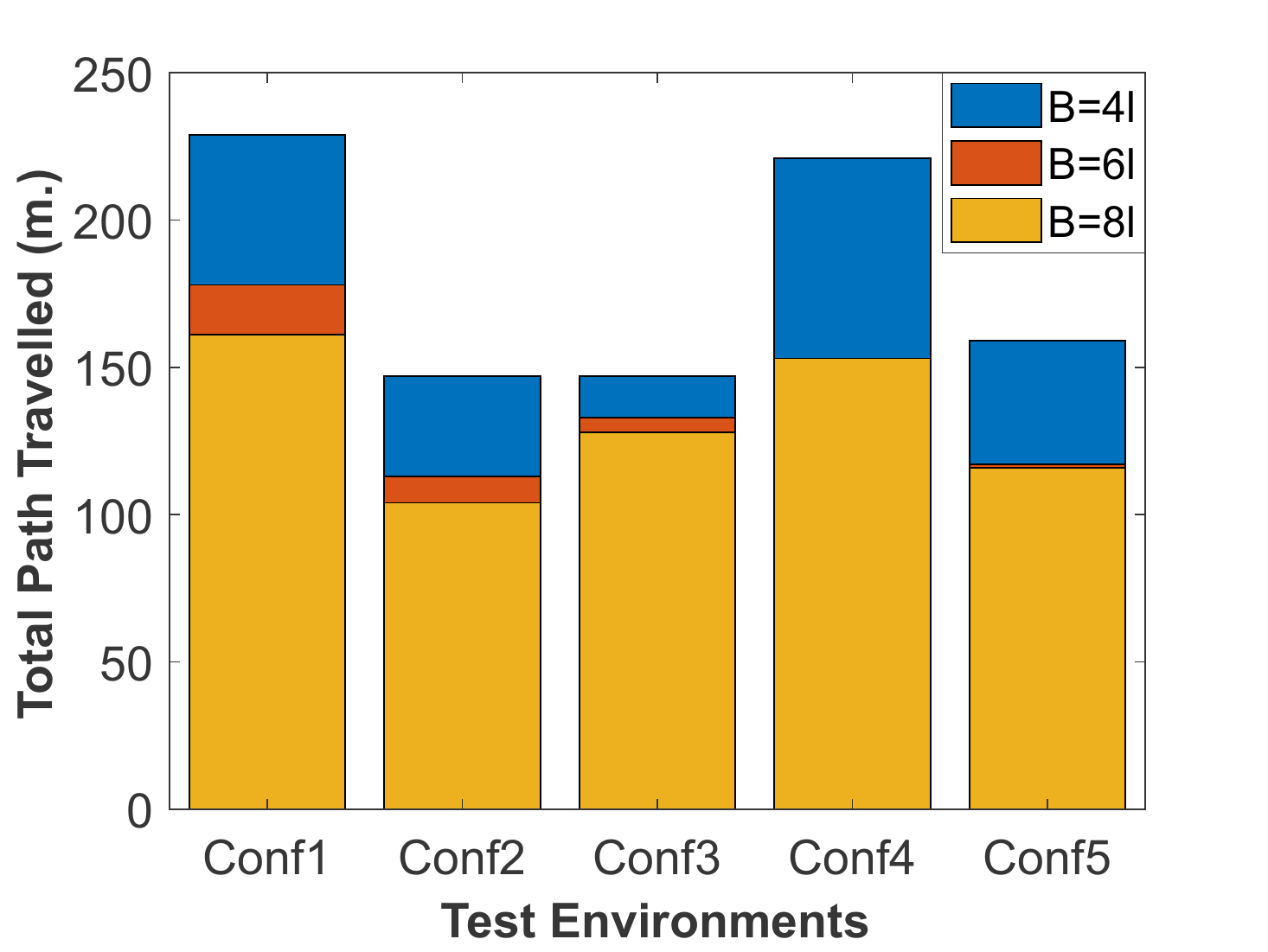}&
\hspace{-0.2in}\includegraphics[width=0.55\linewidth]{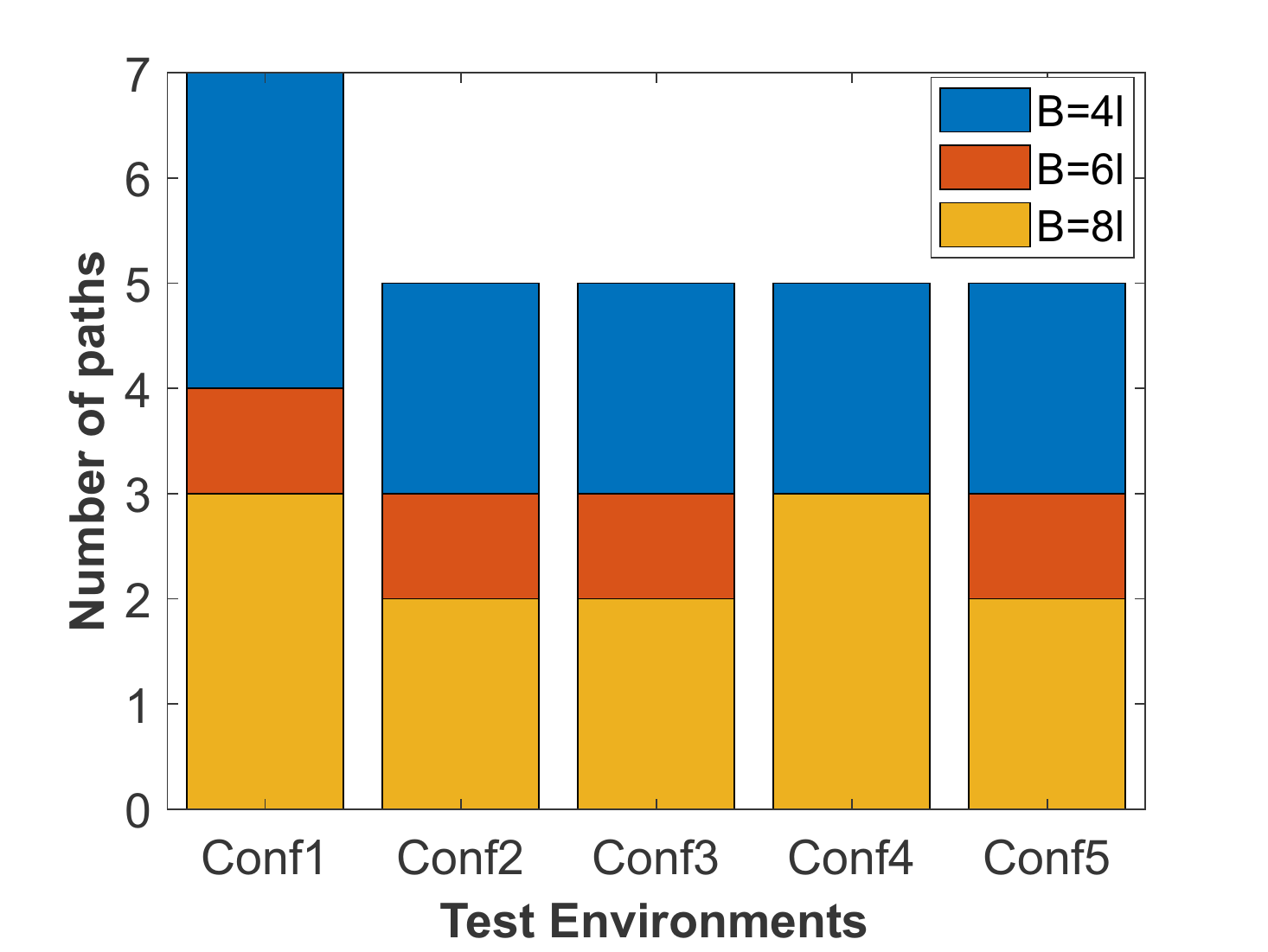}\\
(a)&(b)\\
\end{tabular}
\end{center}
\vspace{-4mm}
\caption{Varying the budget: a) Comparison of travelled path lengths; b) Comparison of total number of paths (i.e., number of visits to $S$).}
\label{fig:varyB}
\end{figure}

Next we are interested to investigate the effect of changing the budget amount on the travelled path length. In order to do this, we vary $B$ between $\{4l,6l,8l\}$. The result is shown in Fig. \ref{fig:varyB}(a). With higher budget, $r$ could cover more cells in one path than with a lower budget. This fact is also reflected in the plot where travelled path length is higher with lower budget and vice-versa. Similarly, when the budget is higher and $r$ is covering more cells in a single path, it needs to come back to the charging station less often and consequently, the total number of paths also reduces. This can be observed in Fig. \ref{fig:varyB}(b). On average, $r$ visited $S$ $2.30$ times more with $B=4l$ than with $B=8l$.

\begin{figure}[ht!]
\begin{center}
\includegraphics[width=0.65\linewidth]{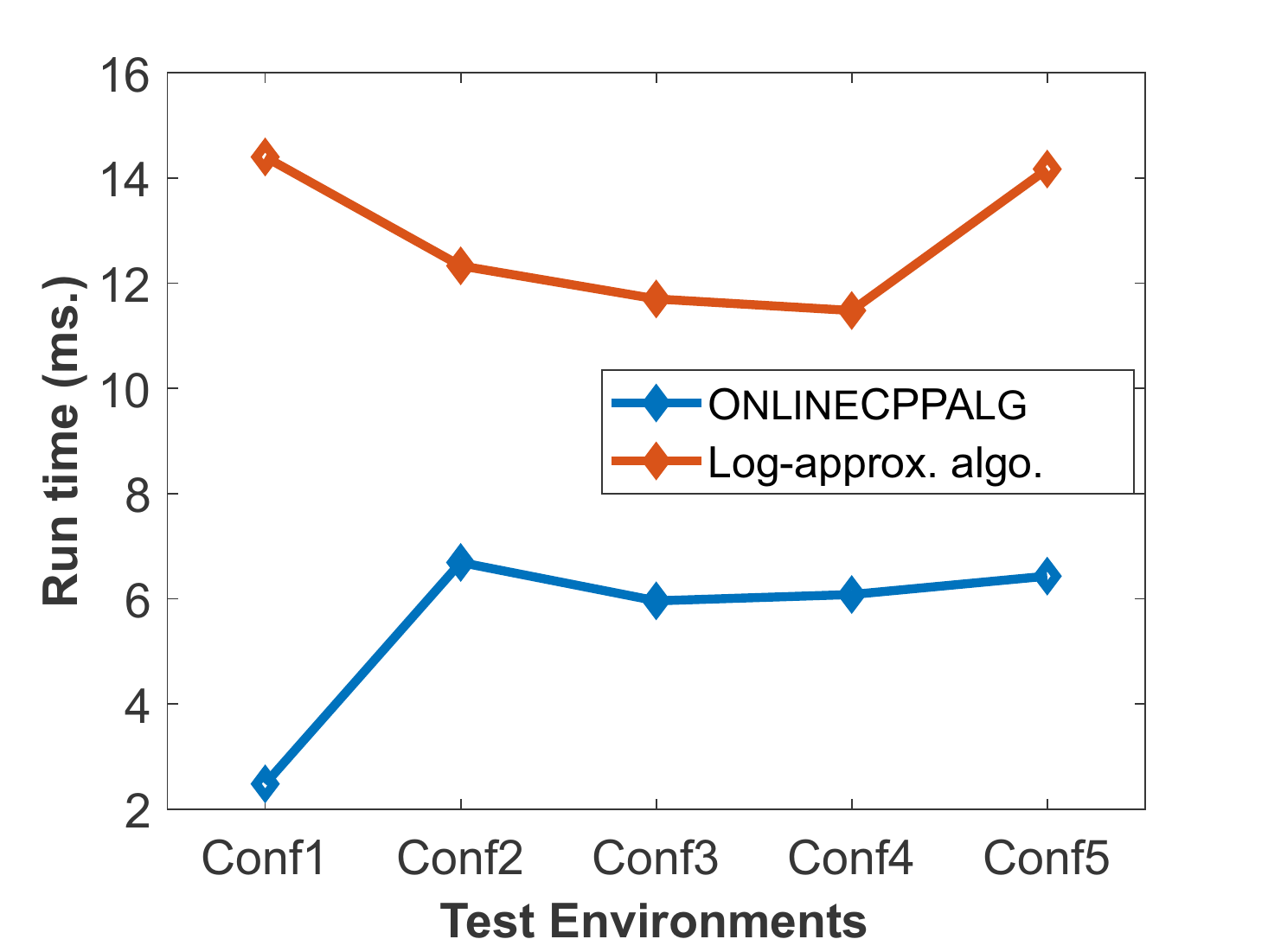}
\end{center}
\vspace{-4mm}
\caption{Runtime comparison against the log-approximation algorithm \cite{Sharma2019}.}
\label{fig:time_compare_aamas}
\end{figure}

Next we are interested to investigate the run time of the proposed algorithm. We also compare this metric against the algorithm proposed in \cite{Sharma2019}. The result is shown in Fig. \ref{fig:time_compare_aamas}. On average, our algorithm is shown to be $2.74$ times faster than \cite{Sharma2019} while the maximum ratio is $5.80$ (Conf1). 
Finally, the paths followed by $r$ in different environment configurations are shown in Fig. \ref{fig:show_paths}; a video of the simulation is also submitted.

%\begin{comment}
\section{Conclusion and Future Work}
We have presented an algorithm for {\cov} by an energy-constrained robot achieving $10$-approximation, improving significantly on the state-of-the-art $O(\log(B/L))$-approximation \cite{Sharma2019}. %This result suggests that the universal lower bound of $\Omega(\log (B/4L))$-approximation provided in \cite{ShnapsR16} does not apply to {\cov}. 
It is also simpler to implement compared to \cite{Sharma2019}. Our simulation results validate the approximation bound established theoretically. We have empirically shown that our proposed approach outperforms the state-of-the-art algorithm both in terms of run time and total traversal cost for the complete coverage. In the future, we plan to test our algorithm in a real-world setting.

{\small
\bibliographystyle{abbrv}
\bibliography{references}

\begin{thebibliography}{10}

\bibitem{Applegate:2007}
D.~L. Applegate, R.~E. Bixby, V.~Chvatal, and W.~J. Cook.
\newblock {\em The Traveling Salesman Problem: A Computational Study}.
\newblock Princeton University Press, Princeton, NJ, USA, 2007.

\bibitem{Brass:2009}
P.~Brass, A.~Gasparri, F.~Cabrera-Mora, and J.~Xiao.
\newblock Multi-robot tree and graph exploration.
\newblock In {\em ICRA}, pages 495--500, 2009.

\bibitem{ChoiLBO09}
Y.~Choi, T.~Lee, S.~Baek, and S.~Oh.
\newblock Online complete coverage path planning for mobile robots based on
  linked spiral paths using constrained inverse distance transform.
\newblock In {\em {IROS}}, pages 5788--5793, 2009.

\bibitem{Choset:2000}
H.~Choset.
\newblock Coverage of known spaces: The boustrophedon cellular decomposition.
\newblock {\em Auton. Robots}, 9(3):247--253, Dec. 2000.

\bibitem{choset2001coverage}
H.~Choset.
\newblock Coverage for robotics--a survey of recent results.
\newblock {\em Annals of mathematics and artificial intelligence},
  31(1-4):113--126, 2001.

\bibitem{Das2018}
S.~Das, D.~Dereniowski, and P.~Uznanski.
\newblock Brief announcement: Energy constrained depth first search.
\newblock In {\em ICALP}, pages 165:1--165:5, 2018 (A full version in
  http://arxiv.org/abs/1709.10146).

\bibitem{Fraigniaud:2006}
P.~Fraigniaud, L.~G\c{a}sieniec, D.~R. Kowalski, and A.~Pelc.
\newblock Collective tree exploration.
\newblock {\em Netw.}, 48(3):166--177, 2006.

\bibitem{Gabriely:2001}
Y.~Gabriely and E.~Rimon.
\newblock Spanning-tree based coverage of continuous areas by a mobile robot.
\newblock {\em Annals of Mathematics and Artificial Intelligence},
  31(1-4):77--98, May 2001.

\bibitem{Galceran:2013}
E.~Galceran and M.~Carreras.
\newblock A survey on coverage path planning for robotics.
\newblock {\em Robot. Auton. Syst.}, 61(12):1258--1276, Dec. 2013.

\bibitem{Gonzlez2005}
E.~Gonz{\'a}lez, O.~{\'A}lvarez, Y.~D{\'i}az, C.~Parra, and C.~Bustacara.
\newblock Bsa: A complete coverage algorithm.
\newblock {\em ICRA}, pages 2040--2044, 2005.

\bibitem{Laporte92}
G.~Laporte.
\newblock The vehicle routing problem: An overview of exact and approximate
  algorithms.
\newblock {\em European Journal of Operational Research}, 59(3):345--358, 1992.

\bibitem{Li:1992}
C.-L. Li, D.~Simchi-Levi, and M.~Desrochers.
\newblock On the distance constrained vehicle routing problem.
\newblock {\em Oper. Res.}, 40(4):790--799, 1992.

\bibitem{Mannadiar2010}
R.~Mannadiar and I.~M. Rekleitis.
\newblock Optimal coverage of a known arbitrary environment.
\newblock {\em ICRA}, pages 5525--5530, 2010.

\bibitem{MishraRMA16}
S.~Mishra, S.~Rodr{\'{\i}}guez, M.~Morales, and N.~M. Amato.
\newblock Battery-constrained coverage.
\newblock In {\em {CASE}}, pages 695--700, 2016.

\bibitem{Nagarajan:2012}
V.~Nagarajan and R.~Ravi.
\newblock Approximation algorithms for distance constrained vehicle routing
  problems.
\newblock {\em Netw.}, 59(2):209--214, 2012.

\bibitem{Sharma2019}
G.~Sharma, A.~Dutta, and J.-H. Kim.
\newblock Optimal online coverage path planning with energy constraints.
\newblock In {\em AAMAS}, 2019 (Accepted).

\bibitem{ShnapsR16}
I.~Shnaps and E.~Rimon.
\newblock Online coverage of planar environments by a battery powered
  autonomous mobile robot.
\newblock {\em {IEEE} Trans. Automation Science and Engineering},
  13(2):425--436, 2016.

\bibitem{Strimel2014}
G.~P. Strimel and M.~M. Veloso.
\newblock Coverage planning with finite resources.
\newblock In {\em IROS}, pages 2950--2956, 2014.

\bibitem{WeiICRA18}
M.~Wei and V.~Isler.
\newblock Coverage path planning under the energy constraint.
\newblock In {\em ICRA}, pages 368--373, 2018.

\bibitem{WeiI18}
M.~Wei and V.~Isler.
\newblock A log-approximation for coverage path planning with the energy
  constraint.
\newblock In {\em ICAPS}, pages 532--539, 2018.

\end{thebibliography}
}

\end{document}